\theoremstyle{remark}  
\NewDocumentCommand{\haoyu}{mg}{%
  \IfNoValueTF{#2}%
    {\textcolor{teal}{#1}}%
    {\textcolor{teal}{$\blacktriangleright$}#1
     \textcolor{teal}{$\triangleright$ #2$\blacktriangleleft$}}%
}
\newcommand{\ourmethod}{\textsc{SSTGNN}}
\newcommand{\mypara}[1]{
    \vspace{2pt}
    \par\noindent\textbf{#1}
    \vspace{0pt}
    \noindent}
\begin{document}

\title{When Deepfake Detection Meets Graph Neural Network: \\ a Unified and Lightweight Learning Framework}

\author{Haoyu Liu}\authornote{Both authors contributed equally to this research.}
\orcid{0000-0002-0839-5460}
\affiliation{%
  \institution{Nanyang Technological University}
  \country{Singapore}}
\email{haoyu.liu@ntu.edu.sg}

\author{Chaoyu Gong}\authornotemark[1]
\orcid{0000-0002-5540-5350}
\affiliation{%
  \institution{Nanyang Technological University}
  \country{Singapore}}
\email{chaoyu.gong@ntu.edu.sg}

\author{Mengke He}
\orcid{0009-0007-2623-7507}
\affiliation{%
  \institution{Nanyang Technological University}
  \country{Singapore}}
\email{mengke.he@ntu.edu.sg}

\author{Jiate Li}
\orcid{0009-0009-7829-0849}
\authornote{Work done when working as a research assistant at NTU.}
\affiliation{%
  \institution{University of Southern California}
  \country{Los Angeles, USA}}
\email{jiateli@usc.edu}

\author{Kai Han}
\orcid{0000-0002-7995-9999}
\affiliation{%
  \institution{The University of Hong Kong}
  \country{Hong Kong SAR}}
\email{kaihanx@hku.hk}

\author{Siqiang Luo}
\orcid{0000-0001-8197-0903}
\authornote{Siqiang Luo is the corresponding author.}
\affiliation{%
  \institution{Nanyang Technological University}
  \country{Singapore}}
\email{siqiang.luo@ntu.edu.sg}

\renewcommand{\shortauthors}{Liu et al.}

\begin{abstract}
    The proliferation of generative video models has made detecting AI-generated and manipulated videos an urgent challenge. Existing detection approaches often fail to generalize across diverse manipulation types due to their reliance on isolated spatial, temporal, or spectral information, and typically require large models to perform well. This paper introduces SSTGNN, a lightweight Spatial-Spectral-Temporal Graph Neural Network framework that represents videos as structured graphs, enabling joint reasoning over spatial inconsistencies, temporal artifacts, and spectral distortions. SSTGNN incorporates learnable spectral filters and spatial-temporal differential modeling into a unified graph-based architecture, capturing subtle manipulation traces more effectively. Extensive experiments on diverse benchmark datasets demonstrate that SSTGNN not only achieves superior performance in both in-domain and cross-domain settings, but also offers strong efficiency and resource allocation. Remarkably, SSTGNN accomplishes these results with up to \textbf{42$\times$} fewer parameters than state-of-the-art models, making it highly lightweight and resource-friendly for real-world deployment.
\end{abstract}

\begin{CCSXML}
<ccs2012>
   <concept>
       <concept_id>10002950.10003624.10003633.10003645</concept_id>
       <concept_desc>Mathematics of computing~Spectra of graphs</concept_desc>
       <concept_significance>500</concept_significance>
       </concept>
   <concept>
       <concept_id>10010147.10010178.10010224.10010225</concept_id>
       <concept_desc>Computing methodologies~Computer vision tasks</concept_desc>
       <concept_significance>500</concept_significance>
       </concept>
 </ccs2012>
\end{CCSXML}

\ccsdesc[500]{Computing methodologies~Computer vision tasks}
\ccsdesc[500]{Mathematics of computing~Spectra of graphs}

\keywords{Deepfake Detection; Graph Neural Network; Lightweight Model}

\maketitle
\newcommand\kddavailabilityurl{https://doi.org/10.5281/zenodo.18042443}
\ifdefempty{\kddavailabilityurl}{}{
\begingroup\small\noindent\raggedright\textbf{Resource Availability:}\\
The source code of this paper has been made publicly available at \url{\kddavailabilityurl} and \url{https://github.com/hyLiu-777/SSTGNN}.
\endgroup
}


\begin{figure}[!t]
\includegraphics[width=0.42\textwidth]{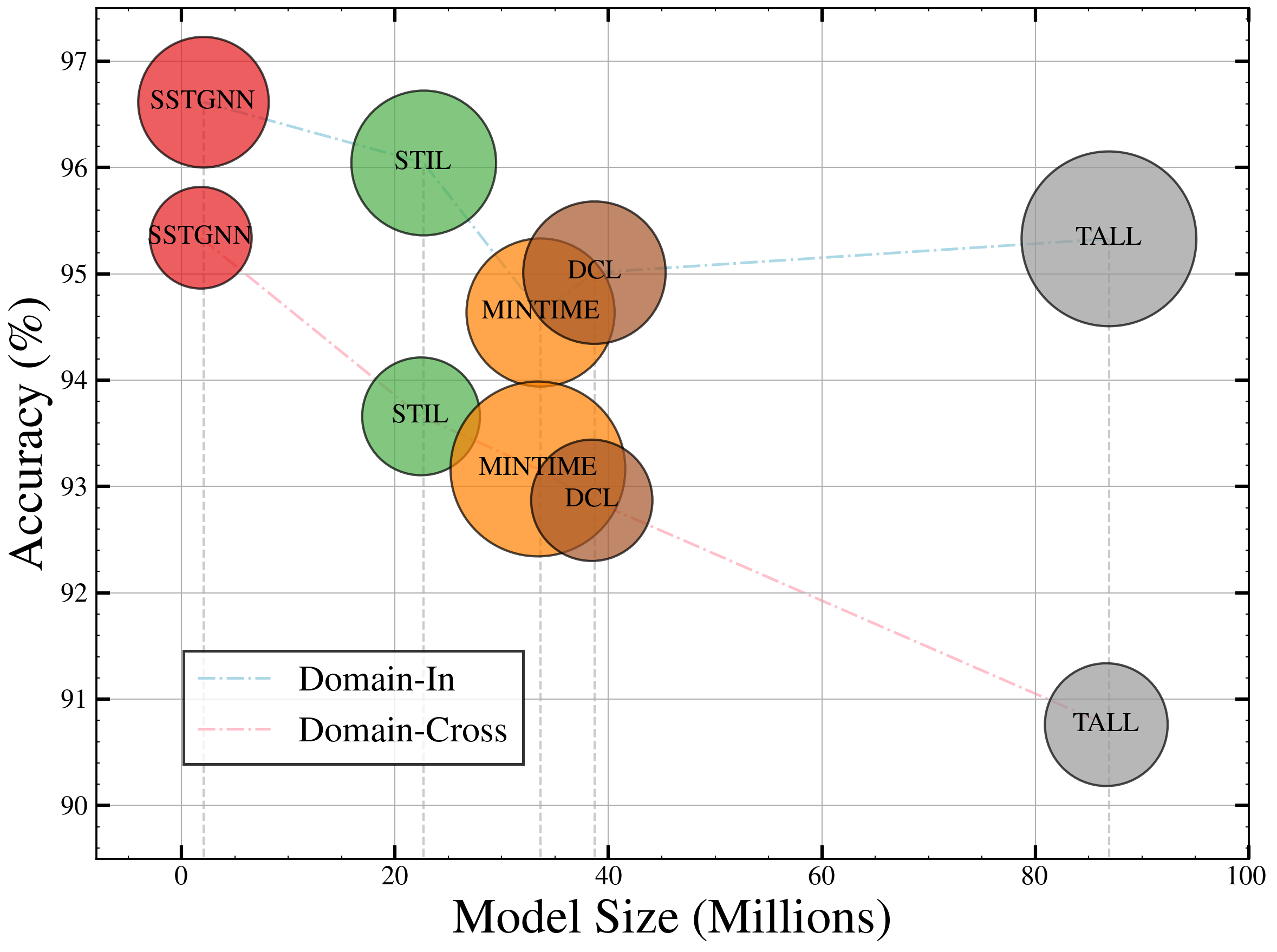}
    \vspace{-1.2em}
    \caption{Comparison of Accuracy (Y-axis), Model Size (X-axis), and Training Cost (Bubble Size) for cross-domain and in-domain deepfake video classification. Our method, \ourmethod{}, achieves superior performance over state-of-the-art baselines while requiring up to $42\times$ fewer parameters and reduced training time cost compared to the baselines.}
    \vspace{-1em}
    \label{motivation}
\end{figure}
\section{Introduction}
\label{sec:intro}

The rapid advancement of generative video models, such as Sora~\cite{openai2024sora} and Runway Gen-2~\cite{runway2024gen2}, has significantly lowered the barrier for producing realistic synthetic videos. While these technologies enable creative applications, they also raise serious societal concerns, particularly when misused to manipulate visual evidence in sensitive domains like politics, finance, and public safety~\cite{nazren2024deepfake}. In response, detecting AI-generated videos has become a critical task for preserving media integrity and public trust.
Despite recent progress, the detection landscape remains challenging. Manipulated videos can exhibit diverse and subtle forms of tampering, including fine-grained appearance alterations, imperceptible frequency artifacts, or temporally incoherent frame transitions~\cite{wang2023dynamic}.



\mypara{Motivation.}  
As the boundaries between real and synthetic videos continue to blur, detecting manipulated content is no longer just a technical challenge but a fundamental requirement for safeguarding information integrity. Despite growing research attention, existing methods still struggle to generalize across diverse and unseen manipulation types, particularly those beyond facial forgeries. This calls for more versatile detection frameworks that can reason about manipulations at a deeper, more structural level. Therefore, our \underline{\textbf{first motivation}} is \textit{to address this urgent and socially significant problem with a more generalizable and holistic detection method}.

In this context, a critical question arises: how should we model the video content to enable such holistic understanding?  
Over the years, a wide range of deepfake detection methods have been proposed~\cite{wang2023dynamic}. Among them, convolutional architectures such as ResNet~\cite{he2016deep} and Xception~\cite{chollet2017xception} have been widely adopted to capture spatial and temporal (i.e., visual-only) inconsistencies between real and manipulated content~\cite{rossler2019faceforensics,afchar2018mesonet}. In parallel, frequency-domain approaches~\cite{luo2021generalizing,li2021frequency,masi2020two} aim to detect high-frequency artifacts, upsampling traces~\cite{liu2021spatial}, and spectral anomalies characteristic of GAN-generated content~\cite{goodfellow2014generative}. However, these methods typically process spatial, temporal, and spectral information in isolation, or fuse them via simple late-stage aggregation, without capturing their complex interactions. Moreover, these methods often rely on over-parameterized architectures to compensate for their limited representational capacity, resulting in large model sizes and reduced scalability. This limitation raises our \underline{\textbf{second motivation}}: \textit{the need for a unified and lightweight  framework capturing interactions across spatial, temporal, and spectral domains}.
In fact, graph-based representations offer a natural fit by treating a video as a spatiotemporal graph, where nodes represent local patch features and edges encode structured cross-domain relations. Graph Neural Networks (GNNs) built on such structures would learn expressive representations with significantly fewer parameters.
Figure \ref{motivation} illustrates quantitatively the trade-offs between accuracy, model size, and training cost across several state-of-the-art baselines. 
While some baselines achieve reasonable in-domain or cross-domain performance, they require larger models and more training time. In contrast, our GNN-based model achieves strong accuracy across domains with a significantly smaller model size (up to 42.4$\times$ fewer parameters) and lower training cost, as shown by the upper-left position and small marker size in both panels.

\mypara{Technical Challenges.} We here summarize three major obstacles to advancing the use of GNNs for deepfake video detection:

\noindent\textbf{(1) How to model expressive and efficient patch-level video graphs?}
Unlike previous approaches that represent each frame or object as a node, our objective is to capture fine-grained visual consistencies or inconsistencies at the patch level. This requires designing dense, high-resolution graphs where each node corresponds to a patch and edges encode both intra-frame and inter-frame relationships. The main challenge is to preserve local texture dependencies and temporal motion continuity, while maintaining tractable graph size and training efficiency.

\noindent\textbf{(2) How to learn adaptive spectral filters on video graphs?}
Traditional frequency-based methods employ fixed transforms (e.g., DCT), which are limited in adapting to diverse manipulation artifacts. Our framework instead defines learnable spectral filters over the eigen-decomposition of the video graph Laplacian. The technical challenge is to ensure these filters are expressive enough to capture structural distortions, yet generalizable across videos with varying topologies, graph sizes, and manipulation types.

\noindent\textbf{(3) How to encode non-local temporal and spatial artifacts in graphs?}
Subtle manipulations such as jitter, frame inconsistencies, or unnatural motion patterns are often temporally sparse and spatially dispersed. To detect these, we propose encoding frame-wise differences as negatively temporal edges between corresponding patches. The challenge lies in amplifying temporal inconsistencies without compromising spatial-spectral coherence or introducing instability into the graph learning process.

To address the above challenges, we summarize our key contributions as follows:

\begin{itemize}[leftmargin=*]
\item We propose \ourmethod{}—\underline{S}patial \underline{S}pectral \underline{T}emporal \underline{G}raph \underline{N}eural \underline{N}etwork—a unified graph-based framework that transforms a video into a structured graph integrating spatial, spectral, and temporal information. Unlike prior works that apply GNNs in a shallow or decoupled fashion, our approach leverages expressive graph construction to capture fine-grained manipulation artifacts and enables joint reasoning over visual, structural, and dynamic inconsistencies.
\item We introduce adaptive spectral filtering on graphs via learnable spectral functions defined over the graph Laplacian. This allows the model to discover frequency-domain anomalies in a fully data-driven manner, overcoming the limitations of fixed transforms and enhancing generalization to unseen manipulation types and diverse video domains.
\item We encode both spatial and temporal differentials as negatively weighted edges within the graph, enabling the model to effectively capture subtle spatial anomalies and temporal inconsistencies (such as jitter or abrupt transitions) that are typically difficult to detect with standard frame-wise aggregation. This formulation enhances the model’s sensitivity to temporal inconsistencies while effectively integrating spatial and spectral information.
\end{itemize}

Remarkably, \ourmethod{} achieves rich representational capacity while maintaining a highly compact architecture. \textbf{It delivers state-of-the-art detection performance while requiring up to \boldmath$42\times$ fewer parameters} than existing models. Furthermore, as demonstrated in Table~\ref{tab:efficiency}, \ourmethod{} achieves faster training and inference times as well as substantially reduced memory consumption compared to the baselines. These results collectively highlight the scalability, efficiency, and real-world practicality of our unified graph modeling framework, making it especially well-suited for deployment in resource-constrained environments.
\section{Related Work}

\mypara{Spatial/Temporal detection methods.} A large body of prior work on deepfake detection focuses on spatial and temporal cues, typically leveraging convolutional networks. Early approaches based on architectures like XceptionNet \cite{rossler2019faceforensics} laid the groundwork by training on shared datasets in an end-to-end manner. To improve robustness, some methods target fine-grained regions, such as lips \cite{haliassos2021lips}, eye blinking \cite{jung2020deepvision}, or head pose anomalies \cite{lutz2021deepfake}, which are often poorly synthesized. To capture temporal inconsistencies, recent models incorporate temporal modules. FTCN \cite{zheng2021exploring} combines CNNs with transformers to detect short- and long-term dynamics, while attention-based architectures~\cite{liu2021nightlight} extract spatial features for fusion with temporal modules \cite{zhao2022exploring}. With the rise of Vision Transformers (ViTs), many recent models adopt hybrid CNN-ViT designs for better temporal representation learning \cite{dong2022protecting,kaddar2021hcit,wodajo2021deepfake,lei2024generative, lei2025generative, lei2025zigong}. Beyond architectural innovations, several studies address generalization to unseen forgeries \cite{ojha2023towards,wang2023altfreezing} and privacy risks such as identity leakage \cite{dong2023implicit,huang2023implicit}.


\mypara{Spectral-based detection methods.} Deepfake generation often introduces artifacts in the spectral domain due to GAN upsampling and convolutional processing. Methods such as \cite{frank2020leveraging,luo2021generalizing} detect such artifacts by analyzing the spectral distribution, offering strong generalization with low computational overhead. SPSL \cite{liu2021spatial} further combines spatial and phase spectrum features to identify subtle distortions, while AVFF \cite{oorloff2024avff} utilizes log-mel spectrograms to uncover audio-visual inconsistencies in manipulated content. Durall \textit{et al.} \cite{durall2020watch} show that GAN-generated images exhibit spectral characteristics that deviate from those of natural images. While effective, most spectral-based approaches process frequency-domain signals independently, lacking mechanisms for joint modeling across spatial, temporal, and spectral modalities.

\mypara{GNNs in Computer Vision.} GNNs are a powerful paradigm for modeling structured data~\cite{han2022vision, liu2025sigma, liao2022scara, liao2023ld2}. They have been successfully applied in tasks such as object recognition, segmentation, and video understanding \cite{kipf2016semi}. In the context of deepfake detection, recent works explore GNNs to model relational structures across space or time. Tan \textit{et al.} \cite{tan2023deepfake} use graph learning over action units to model facial region interactions, and Wang \textit{et al.} \cite{wang2023dynamic} construct spatial-frequency graphs with fixed spectral filters. However, many of these approaches treat GNNs as black-box modules, relying on fixed adjacency without explicitly designing graph topologies or integrating multi-domain signals. Anurag \textit{et al.} \cite{arnab2021unified} represent frames as nodes and build temporal edges via similarity, enabling simple yet effective video-level reasoning. In contrast, we construct a fine-grained spatiotemporal graph where each frame is a subgraph, and cross-frame connections capture long-term dynamics. This design allows unified learning of spatial, spectral, and temporal dependencies and offers improved generalizability and interpretability.
\begin{figure*}[t]
    \centering
    \includegraphics[width=1.0\textwidth]{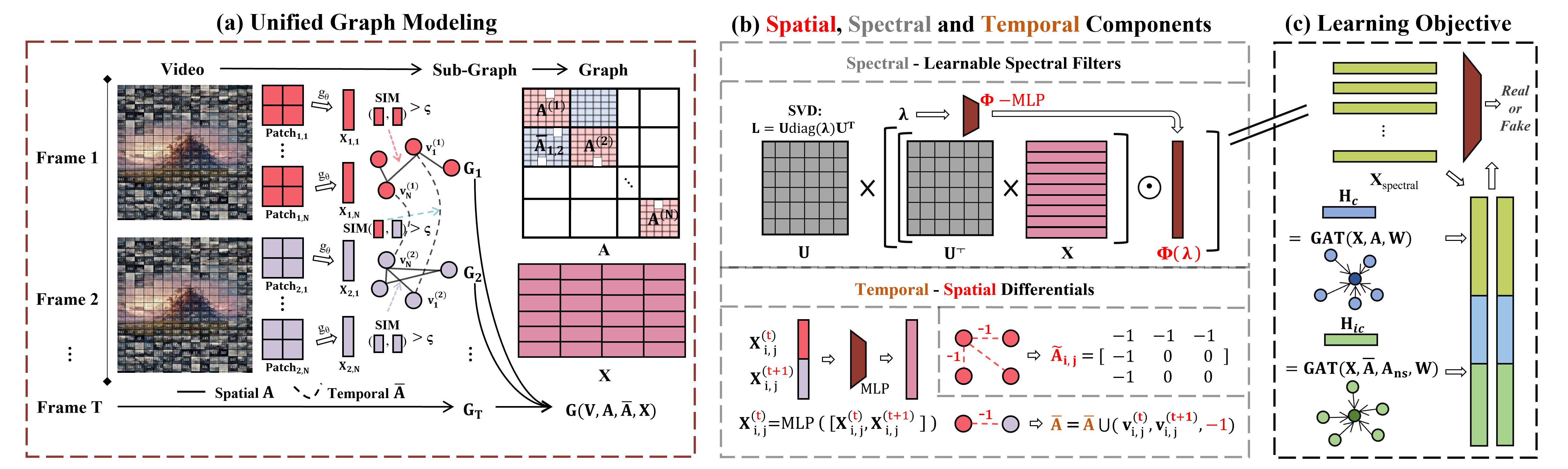}
    \vspace{-0.5cm}
\caption{Overview of our \ourmethod{} framework: (a) Each video frame is divided into patches, which are encoded as node embeddings to form intra-frame graphs {\small$\mathbf{G}_t = (\mathbf{V}_t, \mathbf{A}_t)$} based on patch similarity. Temporal edges $\mathbf{\overline{A}}$ are constructed by connecting corresponding patches across frames using both feature and structural similarity, resulting in a unified spatial-temporal graph $\mathbf{G} = (\mathbf{V}, \mathbf{A}, \mathbf{\overline{A}}, \mathbf{X})$. (b) A learnable spectral filter is applied over the graph Laplacian eigenbasis to extract frequency-domain representations. The temporal component involves concatenating embeddings and incorporating negative edges into $\mathbf{\overline{A}}$, while the spatial differential module constructs each negative sub-adjacency matrix $\mathbf{\widetilde{A}}_{i,j}$. (c) Two Graph Attention Networks (GATs) are employed to model both consistency (via positive edges) and inconsistency (via negative edges). The resulting features are concatenated and fed into a final classifier to predict real or fake videos.}
\label{fig:framework}
\end{figure*}

\section{SSTGNN}
In this section, we introduce \ourmethod{}, a unified learning framework for detecting AI-generated videos. An overview can be seen in Figure~\ref{fig:framework}. By modeling each video as a graph, \ourmethod{} enables flexible and principled aggregation across spatial, spectral, and temporal domains, achieving robust detection performance with dramatically fewer parameters, faster training times, and lower memory overhead. Our key insights are summarized as follows:

\begin{itemize}[leftmargin=*]
\item \emph{Unified Graph Modeling.} Detecting AI-generated videos requires capturing both \emph{intra-frame} visual artifacts (e.g., unnatural textures or distorted facial features) and \emph{inter-frame} temporal inconsistencies. We model image patches as nodes and each frame as a subgraph, constructing spatial edges to enable fine-grained spatial representation beyond coarse frame-level modeling. To capture temporal dynamics, we add edges between corresponding patches across frames based on appearance similarity or change. This yields a spatial-temporal graph supporting flexible graph-component designs in the spatial, spectral, and temporal domains.

\item \emph{Learnable Spectral Filtering.} Building on the constructed video graph, we employ learnable spectral filters—originating from spectral GNNs~\cite{liao2025comprehensive}—to enhance frequency-aware feature extraction. Unlike traditional methods that rely on fixed or hand-crafted transforms, our approach adapts spectral filters during training, enabling more expressive and task-relevant representations.

\item \emph{Spatial-Temporal Differentials.} Inspired by recent advances such as the Neighboring Pixel Relationships (NPR) approach~\cite{tan2024rethinking}, we introduce both spatial and temporal differential modeling via negatively weighted edges. This enables the model to explicitly capture subtle spatial anomalies (e.g., local pixel artifacts) as well as temporal inconsistencies (e.g., frame-to-frame jitter or unnatural transitions), thereby improving sensitivity to the unique patterns of AI-generated content.
\end{itemize}
By integrating these complementary cues, \ourmethod{} yields a comprehensive representation for video authenticity analysis. We then provide detailed descriptions of each component in the following.

\begin{figure}[!t]
\centering
\includegraphics[width=0.48\textwidth]{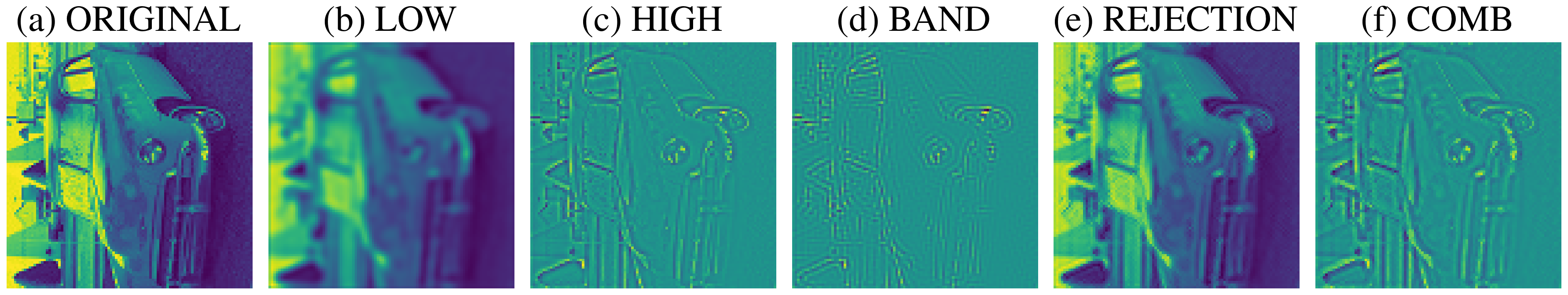}
\vspace{-3.ex}
\caption{Illustration of an input image after our graph spectral filtering with specified function.}
\label{fig:filter_example}
\vspace{-1.ex}
\end{figure}

\subsection{Unified Graph Modeling}
\label{sec:unified_graph}
While GNNs have significant potential for AI-generated video detection, prior works often suffer from overly coarse graph designs, treating each frame as a single node and using GNNs as black-box temporal models~\cite{wang2023dynamic}. Inspired by recent advances in patch-level image graph modeling~\cite{han2022vision, han2023vision}, we represent each frame as a subgraph of patch-level nodes. To capture temporal dynamics, we introduce temporal edges between corresponding patches across frames, forming a unified spatiotemporal graph for the video.

Formally, given a video clip $\mathcal{F} = \{\mathbf{F}^{(1)}, \mathbf{F}^{(2)}, \ldots, \mathbf{F}^{(T)}\}$ with \(T\) frames, we model each frame $\mathbf{F}^{(t)} \in \mathbb{R}^{H \times W \times C}$ at time step \(t\) as a subgraph. Each frame is divided into \(N\) image patches of size $\ell\times\ell$, and their corresponding embeddings are extracted via a patch-wise encoder \(g_\theta\), which can be any pre-trained visual backbone. This yields a node set $\mathbf{V}^{(t)} = \{v_1^{(t)}, v_2^{(t)}, \ldots, v_N^{(t)}\}$ and an embedding matrix $\mathbf{X}^{(t)} = \{x_1^{(t)}, x_2^{(t)}, \ldots, x_N^{(t)}\} \in \mathbb{R}^{N \times d}$, where \(d\) is the embedding dimension, and \(H \times W\), \(C\) are the spatial and channel dimensions of the original frame. Without loss of generality, we also denote the nodes in $\mathbf{V}^{(t)}$ as $\{v^{(t)}_{i,j}|\forall 1\leq i\leq N_H, \forall 1\leq j\leq N_W\}$ to represent its coordinates, where $N_H=\lceil{H}/{\ell}\rceil$ and $N_W=\lceil{W}/{\ell}\rceil$.

To model the intra-frame structure, we construct edges between patch nodes based on visual similarity. For each feature matrix $\mathbf{X}^{(t)}$, each embedding is normalized as
\[
\overline{X}^{(t)} = \frac{\mathbf{X}^{(t)}}{\|\mathbf{X}^{(t)}\|_2 + \epsilon}\quad \text{with } \epsilon = 10^{-4},
\]
and the intra-frame edge weight matrix is defined as
\begin{equation*}
\mathbf{A}^{(t)} = \mathrm{SIM}(\overline{X}^{(t)},\,\overline{X}^{(t)}),
\end{equation*}
where $(\mathbf{A}^{(t)})_{i,j}$ denotes the similarity between nodes $i$ and $j$ in frame $t$, and thus the edge weight between $v^{(t)}_i$ and $v^{(t)}_j$. To balance computational efficiency and significance, a threshold parameter $\tau_s$ is used to prune edges with weights below $\tau_s$. We denote the frame-level subgraph as $\mathbf{G}^{(t)}=(\mathbf{V}^{(t)},\mathbf{A}^{(t)})$. To incorporate basic temporal patterns, we construct temporal edges between every two consecutive frames. At time step $t$, we bridge the subgraphs $\mathbf{G}^{(t)}$ and $\mathbf{G}^{(t+1)}$ by adding edges for similar node pairs at the same coordinates as $\mathbf{S}^{(t)}(v^{(t)}_i, v^{(t+1)}_i):=$
\begin{align*}
\mathrm{SIM}(\mathbf{A}^{(t)}(v^{(t)}_i), \mathbf{A}_{t+1}(v^{(t+1)}_i)) + \mathrm{SIM}(\mathbf{X}^{(t)}(v^{(t)}_i), \mathbf{X}_{t+1}(v^{(t+1)}_i)),
\end{align*}
where the first term measures structural similarity using adjacency matrices and the second considers embedding vector resemblance. If the similarity surpasses threshold $\tau_t$, a temporal edge is defined:
\begin{align*}
\mathbf{S}^{(t)}(v^{(t)}_i, v^{(t+1)}_i) \geq \tau_t \implies \mathbf{\overline{A}}(v^{(t)}_i, v^{(t+1)}_i) = \mathbf{S}^{(t)}(v^{(t)}_i, v^{(t+1)}_i).
\end{align*}
By constructing $\mathbf{\overline{A}}$ across all time steps, we form the complete video graph $\mathbf{G} = (\mathbf{V}, \mathbf{A}, \mathbf{\overline{A}}, \mathbf{X})$, where the node set $\mathbf{V} = \bigcup_{t=1}^T \mathbf{V}^{(t)}$ and the adjacency matrix $\mathbf{A}$ combines spatial and temporal edges with $\mathbf{A} = \bigcup_{t=1}^T \mathbf{A}^{(t)}$. This unified graph captures both intra-frame (spatial) and inter-frame (temporal) correlations, supporting flexible and efficient spatial, spectral, and temporal reasoning in the below.

\subsection{Spatial, Spectral and Temporal Components}
\label{sec:components}
Building upon our unified graph modeling, we are able to flexibly design specialized components tailored for artifact detection.

\subsubsection{Learnable Spectral Filters}

As motivated in Section~\ref{sec:intro}, traditional frequency-based methods~\cite{wang2023dynamic} apply fixed transforms (e.g., DCT) and hand-crafted filters to video frames. A filter is a function that specifies which frequency components to emphasize or suppress, e.g., high-pass filters highlight edges, while low-pass filters smooth textures. Such operations are common in image processing for enhancing or removing specific structures.
In our case, we construct graphs from raw pixels and perform filtering in the graph spectral domain, where frequency reflects variation across the graph. This enables selective manipulation of signal components, akin to classical filters but grounded in the graph structure.

Moreover, while traditional filters inject useful inductive biases, their static and hand-crafted nature may be sub-optimal. We address this by learning \emph{arbitrary filters} from graph signals, allowing the model to flexibly extract relevant frequency patterns for improved representation. Following standard GNN practice, we use the normalized graph Laplacian:
\begin{align*}
\mathbf{L} = \mathbf{I} - \mathbf{D}^{-1/2} \mathbf{A} \mathbf{D}^{-1/2},
\end{align*}
where $\mathbf{D}$ is the degree matrix and $\mathbf{I}$ is the identity. To ensure symmetry, set $\mathbf{L} \coloneq (\mathbf{L} + \mathbf{L}^\top)/2$ and perform eigendecomposition:
\begin{align*}
\mathbf{L} = \mathbf{U} \operatorname{diag}(\boldsymbol{\lambda}) \mathbf{U}^\top,
\end{align*}
where $\boldsymbol{\lambda} \in \mathbb{R}^N$ and $\mathbf{U} \in \mathbb{R}^{N \times N}$ are the eigenvalues and eigenvectors. The learnable spectral filter is defined as
\begin{align*}
\boldsymbol{\phi}(\boldsymbol{\lambda}) = f_{\mathrm{MLP}}(\boldsymbol{\lambda}) \in \mathbb{R}^N,
\end{align*}
enabling flexible, data-driven spectral manipulation.

Node embeddings $\mathbf{X}$ are projected onto the spectral domain:
\begin{align*}
\widetilde{\mathbf{X}} = \mathbf{U}^\top \mathbf{X}.
\end{align*}
The learned spectral filter modulates these coefficients, and the result is transformed back:
\begin{align*}
\mathbf{X}_{\mathrm{spectral}} = \mathbf{U} \operatorname{diag}(\boldsymbol{\phi}(\boldsymbol{\lambda})) \widetilde{\mathbf{X}}.
\end{align*}
A global pooling operation yields aggregated spectral features:
\begin{align*}
\mathbf{Z}_{\mathrm{spectral}} = \operatorname{Pool}(\mathbf{X}_{\mathrm{spectral}}) \in \mathbb{R}^d.
\end{align*}

The intuition behind this approach is that representing raw pixels as a graph enables the model to learn arbitrary, task-specific spectral filters via parameterized functions of the eigenvalues~$\boldsymbol{\lambda}$. This flexibility allows the model to selectively emphasize or suppress frequency components most relevant to the detection task. As shown in Figure~\ref{fig:filter_example}, panels (b)–(e) illustrate the impact of applying low-pass, high-pass, band-pass, and band-rejection filters to~$\boldsymbol{\lambda}$, thereby enhancing or attenuating various image features—such as smoothing textures, highlighting edges, or isolating intermediate-scale patterns. Panel (f) (“Comb”) demonstrates that aggregating multiple frequency bands yields richer and more detailed image reconstructions. Importantly, these spectral filters are not fixed; the filter function~$\boldsymbol{\phi}(\boldsymbol{\lambda})$, parameterized by MLP layers, supports fully adaptive and data-driven filtering tailored to the downstream task. This approach overcomes the limitations of hand-crafted filters and enables more expressive and effective feature extraction.

\subsubsection{Spatial-Temporal Differentials as Negative Edges}

Recall from Section~\ref{sec:unified_graph} that we construct a unified graph using fundamental inter- and intra-frame edges by measuring similarity, providing initial \emph{consistency} among patches. However, as shown in~\cite{gu2021spatiotemporal,tan2024rethinking}, explicitly measuring spatial or temporal \emph{inconsistency} is valuable for detecting generative bias. NPR~\cite{tan2024rethinking} computes local pixel interdependence, and~\cite{gu2021spatiotemporal} exploits temporal differences across adjacent frames. Motivated by these findings, we incorporate negative edges for inconsistency message passing:

\mypara{Spatial Differential.}
The original NPR technique focuses on local interdependence induced by up-sampling. Specifically, an $\ell_0$-NPR processes each pixel grid of size $\ell_0\times\ell_0$ into local differential:
\begin{align*}
\begin{bmatrix}
w_{1,1} & \cdots & w_{1,\ell_0} \\
\vdots & \ddots & \vdots \\
w_{\ell_0,1} & \cdots & w_{\ell_0,\ell_0}
\end{bmatrix}
\rightarrow
\begin{bmatrix}
\widetilde{w}_{1,1} & \cdots & \widetilde{w}_{1,\ell_0} \\
\vdots & \ddots & \vdots \\
\widetilde{w}_{\ell_0,1} & \cdots & \widetilde{w}_{\ell_0,\ell_0}
\end{bmatrix},
\end{align*}
where $\widetilde{w}_{i,j}=w_{i,j}-w_{1,1}$, with $\ell_0=2$ shown to yield best performance~\cite{tan2024rethinking}. This approach is effective in identifying fake regions.

We generalize this process to node embeddings by adding negative edges in our graph. Specifically for all $1 \leq i, j \leq \lceil N/\ell_0 \rceil$, $1 \leq t \leq T$, for node $v^{(t)}_{i_0,j_0}$ with $i_0 = i\ell_0,\, j_0 = j\ell_0$, we then construct a sub-adjacency matrix $\mathbf{\widetilde{A}}_{i,j} \in \mathbb{R}^{\ell_0 \times \ell_0}$ correspondingly:
\begin{align*}
    \mathrm{diag}(\mathbf{\widetilde{A}}_{i,j}) = \mathbf{1};\quad 
    \mathbf{\widetilde{A}}_{i,j}(v^{(t)}_{i_0, j_0}, :) = -\mathbf{1};\quad  
    \mathbf{\widetilde{A}}_{i,j}(:, v^{(t)}_{i_0}) = -\mathbf{1}.
\end{align*}
This sub-adjacency matrix is later used for message passing in graph learning (see Section~\ref{sec:graph_learning}). Intuitively, our approach captures local interdependence among nodes, applicable to pixels or higher-level embeddings. As proven in Theorem~\ref{lem:NPR}, with appropriate settings, $\ell_0$-NPR is a special case of our formulation after message passing.

\begin{theorem}[Proof in Appendix~\ref{appendix:proof_NPR}]
\label{lem:NPR}
An $\ell_0$-NPR is equivalent to our spatial differential module when a small patch size is used and traditional message passing is adopted; specifically, by setting patch size $\ell=1$ and using SGC-aggregation~\cite{wu2019simplifying}.
\end{theorem}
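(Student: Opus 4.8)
The plan is to specialise the spatial-differential construction to the regime named in the statement and then unroll the message-passing update; the equivalence should fall out as a one-line matrix computation once the right conventions are fixed. First, setting the patch size $\ell=1$ makes every node a single pixel, and instantiating the patch encoder $g_\theta$ as the identity on $1\times1$ patches makes the node-embedding matrix $\mathbf{X}$ literally the raw pixel grid, so $x^{(t)}_{i,j}=w_{i,j}$ in the notation of the NPR block. Next I would recall what ``SGC-aggregation''~\cite{wu2019simplifying} does: it removes the nonlinearities and collapses the per-layer weights, so $K$ propagation layers produce $\mathbf{P}^{K}\mathbf{X}$ for the propagation matrix $\mathbf{P}$ (up to a single linear readout, here the identity). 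Taking $K=1$ and $\mathbf{P}=\mathbf{A}_{\mathbf{ns}}$, the negative-edge adjacency assembled from the blocks $\mathbf{A}^{i,j}_{\mathbf{ns}}$ and zero between distinct grids and frames, a single aggregation step is exactly the linear map $\mathbf{X}\mapsto\mathbf{A}_{\mathbf{ns}}\mathbf{X}$. Since $\mathbf{A}_{\mathbf{ns}}$ is block diagonal — one $\ell_0\times\ell_0$ block per spatial grid per frame — this map decouples over grids, so it suffices to analyse a single block.

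The core step is the per-block computation. Fix a grid in some frame $t$ and let $v_{\mathrm{a}}$ be its anchor node, i.e. the one whose row and column were set to $-\mathbf{1}$; identify $v_{\mathrm{a}}$ with the reference pixel of the corresponding NPR block (which corner plays this role is immaterial). By the definition of $\mathbf{A}^{i,j}_{\mathbf{ns}}$, for any non-anchor node $v$ the $v$-th row of the block has a $1$ on the diagonal, a $-1$ in column $v_{\mathrm{a}}$, and $0$ elsewhere, so the $v$-th row of $\mathbf{A}_{\mathbf{ns}}\mathbf{X}$ equals $x^{(t)}_v-x^{(t)}_{v_{\mathrm{a}}}$, which under $\ell=1$ is the difference between that pixel's value and the reference pixel's value — precisely the $\ell_0$-NPR differential $\widetilde{w}_{p,q}=w_{p,q}-w_{1,1}$ at that location. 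This holds uniformly over all non-anchor pixels, all grids, and all frames, so one SGC step over $\mathbf{A}_{\mathbf{ns}}$ reproduces the $\ell_0$-NPR output entrywise on the non-anchor coordinates.

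Finally I would reconcile the anchor node and make the two required conventions explicit; this is where the only friction lies. For the anchor, $\ell_0$-NPR outputs $\widetilde{w}_{1,1}=0$, which carries no signal, whereas the stated construction leaves the anchor row as $-\mathbf{1}$; I would handle this by masking the anchor node from the downstream pooling/readout (equivalently, dropping the vacuous zero NPR entry), so the two feature sets agree up to that coordinate. And since literal SGC uses the symmetric renormalisation $\widetilde{\mathbf{D}}^{-1/2}\widetilde{\mathbf{A}}\widetilde{\mathbf{D}}^{-1/2}$, which is ill-posed here because a non-anchor node's row sum (its signed degree) is $1+(-1)=0$, I would read ``SGC-aggregation'' as the renormalisation-free linear propagation $\mathbf{X}\mapsto\mathbf{A}_{\mathbf{ns}}\mathbf{X}$ and say so explicitly. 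With those conventions, the computation above shows the spatial-differential module with $\ell=1$ and one SGC aggregation coincides with $\ell_0$-NPR, which is the claimed special case. The anchor/normalisation bookkeeping is the main obstacle; everything else is unrolling a block-diagonal matrix--vector product.
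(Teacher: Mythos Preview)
Your proposal is correct and takes essentially the same approach as the paper: both reduce the claim to verifying that one step of linear SGC-style propagation $\mathbf{X}\mapsto\mathbf{A}_{\mathbf{ns}}\mathbf{X}$ reproduces the NPR differential $w_{p,q}-w_{1,1}$ block by block once $\ell=1$ makes nodes coincide with pixels. Your treatment is in fact more careful than the paper's, which argues the equivalence informally and never addresses the anchor-row mismatch or the zero-signed-degree obstruction to literal SGC renormalisation that you explicitly flag and resolve.
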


This theorem demonstrates that our spatial differential module not only subsumes the $\ell_0$-NPR as a special case but also extends its capabilities, highlighting the effectiveness and versatility of our framework.

\mypara{Temporal Differential.}
To capture temporal inconsistency, we enhance each node’s representation by integrating information from adjacent frames. Intuitively, AI-generated videos may show subtle inconsistencies or unnatural transitions that are missed by spatial features alone. By explicitly modeling temporal relationships, we improve artifact detection. For node $v^{(t)}_{i,j}$ at $1 \leq t < T$, with raw embedding $\mathbf{X}^{(t)}_{i,j}$, we concatenate features from the next frame and apply an MLP:
\begin{align*}
\mathbf{X}^{(t)}_{i,j} = \mathrm{MLP}\big(\mathrm{Concat}[\mathbf{X}_{i,j}^{(t)},\, \mathbf{X}_{i,j}^{(t+1)}]\big).
\end{align*}
This enables implicit integration of temporal dynamics, facilitating detection of subtle frame-to-frame changes. We further introduce negative edges into the inter-frame adjacency matrix:
\begin{align*}
\mathbf{\overline{A}}(v^{(t)}_{i,j}, v^{(t+1)}_{i,j}) = -1,
\end{align*}
for all $1 \leq i \leq N_H$, $1 \leq j \leq N_C$, and $1 \leq t < T$. These negative edges penalize abrupt changes in node features across consecutive time steps, thereby guiding the model’s attention to regions exhibiting pronounced temporal discrepancies. As demonstrated in Section~\ref{sec:ablation}, incorporating these negative temporal edges substantially enhances the overall detection performance.

By combining implicit feature integration with explicit relational modeling, our temporal differential module proves highly effective in revealing frame-to-frame inconsistencies that are characteristic of synthesized or manipulated videos, thereby significantly improving detection accuracy as demonstrated later in Section~\ref{sec:ablation}.

\subsubsection{Conventional Learning Model}
\label{sec:graph_learning}

With all edges constructed, we apply the typical Graph Attention Network (GAT)~\cite{velivckovic2018graph} to our video graph due to its effectiveness in message passing. A single-layer GAT $(\mathbf{X},\mathbf{A},\mathbf{W})$ applies a shared linear transformation $\mathbf{W}\in\mathbb{R}^{d\times d}$ to each node feature $\mathbf{x}_i \in \mathbb{R}^d$:
$
\mathbf{h}_i = \mathbf{W}\mathbf{x}_i.
$
For each edge $(i, j)$, it computes an unnormalized attention score:
\[
e_{i,j} = \mathrm{LeakyReLU}\bigl(\mathbf{a}^\top \mathrm{Concat}[\mathbf{h}_i \Vert \ \mathbf{h}_j]\bigr),
\]
where $\mathbf{a}$ is a learnable vector and $\Vert$ denotes concatenation. Attention scores are masked by the adjacency matrix $\mathbf{A}$ and normalized:
\[
\alpha_{i,j} = \frac{\exp(e_{i,j})}{\sum_{k \in \mathcal{N}(i)} \exp(e_{ik})}.
\]
Each node then aggregates neighbors’ features with a nonlinear activation, denoted as $\sigma$ in the below:
\[
\mathbf{h}_i' = \sigma\left(\sum_{j \in \mathcal{N}(i)} \alpha_{i,j} \mathbf{h}_j\right).
\]
We then employ two GAT layers to capture both the consistency and inconsistency information:
\begin{align*}
\mathbf{H}_c  = \mathrm{GAT}(\mathbf{X}, \mathbf{A}, \mathbf{W}) \text{ and } \mathbf{H}_{ic} &= \mathrm{GAT}(\mathbf{X},\, \mathbf{\overline{A}} \cup \mathbf{\widetilde{A}},\, \mathbf{W}),
\end{align*}
where $\mathbf{A}$ encodes standard edges, $\mathbf{\overline{A}}$ complementary connections, and $\mathbf{\widetilde{A}}$ negative spatial edges. Final node representations are then concatenated and processed by an MLP layer for spatial embedding:
\[
\mathbf{Z}_{\text{spatial}} = \mathrm{MLP}\left(\mathrm{Concat}\left[\mathbf{H}_c\Vert \mathbf{H}_{ic}\right]\right).
\]
This spatial representation will then be combined with the spectral feature to support binary classification of video authenticity.

\subsection{Learning Objective}
Through the above steps, we have established a unified graph modeling and learning framework tailored for AI-generated video detection. In the final stage, all extracted representations are integrated for classification. Specifically, we concatenate learned embeddings from the spatial, spectral, and temporal domains:
\[
\mathbf{Z} = [\mathbf{Z}_{\text{spatial}}\Vert \mathbf{Z}_{\text{spectral}}].
\]
This overall representation is passed through an MLP to produce classification logits:
\[
\mathbf{y} = \mathrm{MLP}(\mathbf{Z}) \in \mathbb{R}^{N \times 2},
\]
where label 0 denotes authentic videos and label 1 indicates AI-generated content. The model is trained end-to-end using cross-entropy loss to optimize classification. The full procedure of \ourmethod{} is outlined in Algorithm~\ref{alg:SSTGNN} of Section~\ref{sec:algo}.
\section{Experiments}
In this section, we explore the following research questions (RQs):

\begin{itemize}[leftmargin=*]
\item \textbf{RQ1: In-domain Detection Performance.} How effective is \ourmethod{} when trained and tested on the same deepfake videos?
\item \textbf{RQ2: Cross-domain Generalization.} How well does \ourmethod{} generalize across different video generation domains, specifically (i) cross-method generalization, where \ourmethod{} is trained on one video generation method and tested on another, and (ii) cross-dataset generalization, evaluating \ourmethod{}'s performance across multiple datasets in one-to-many and many-to-many settings?
\item \textbf{RQ3: Resource Allocation (Memory, Time, and Number of Parameters).} How lightweight is \ourmethod{} in terms of memory consumption, training time, and model memory footprint?
\item \textbf{RQ4: Ablation Study.} What are the individual contributions of \ourmethod{}'s components, patch size, and pruning threshold?
\item \textbf{RQ5: Interpretability.} How does SSTGNN capture key artifacts with learned spectral functions and gradient-based heatmaps.
\end{itemize}

\mypara{Baselines.} We compare \ourmethod{} against 10 state-of-the-art baseline methods, which were selected based on their relevance and proven effectiveness in deepfake detection tasks. These baselines include various approaches, ranging from spatial-temporal analysis to graph-based and attention-based methods, as summarized in Table \ref{baselines}. For open-sourced baselines, we run their official implementations using the hyper-parameters specified in their respective papers. For baselines without publicly available code, we directly report the results from the original publications under consistent settings.


%









\begin{table}[!t]
\centering
\Huge
\caption{Description and comparison of baselines. We highlight \ourmethod{} for its unified capture of spatial, spectral and temporal designs, and ultra-light model sizes.}
\vspace{-0.2cm}
\renewcommand{\arraystretch}{1.6}
\resizebox{0.48\textwidth}{!}{
\begin{tabular}{l|ccc}
\toprule
\textbf{Methods} 
& \textbf{Publication} 
& \textbf{Features} 
& \textbf{Model Size} \\ 
\midrule
MINTIME \cite{coccomini2024mintime}
& IEEE TIFS 2024
& Spatial, Temporal
& 33.64m \\ 

TNEM \cite {she2024using}
& IEEE TIFS 2024
& Spatial, Temporal
& - \\ 

TALL \cite{coccomini2024mintime}
& ICCV 2023
& Spatial, Temporal
& 86.91m \\ 

DCL \cite{sun2022dual}
& AAAI 2022
& Spatial
& 38.69m \\ 

SPLAFS \cite{chen2022learning}
& IEEE TCSVT 2022
& Spatial, Temporal
& - \\ 

RECCE \cite{cao2022end}
& CVPR 2022
& Spatial
& - \\ 

STIL \cite{gu2021spatiotemporal}
& ACM MM 2021 
& Spatial, Temporal
& 22.69m \\ 

MultiAtt \cite{zhao2021multi}
& CVPR 2021
& Spatial
& - \\ 

FaceXray \cite{li2020face}
& CVPR  2020
& Spatial
& - \\

XceptionNet \cite{chollet2017xception}
& CVPR 2017
& Spatial 
& 22.90m \\ 

\midrule
\textbf{\ourmethod{}} & This paper & \color{red}{Spatial, Spectral, Temporal} & {\color{red}\textbf{2.05m}} \\
\bottomrule
\end{tabular}}
\label{baselines}
\end{table}

\mypara{Datasets.} We evaluate \ourmethod{} and the 10 baselines on a comprehensive range of datasets, as detailed in Table \ref{datasets} in Appendix \ref{app1}. These datasets are widely used in the deepfake detection community. The FF++ dataset consists of four subsets, each generated with a distinct face manipulation technique: DeepFakes (DF), Face2Face (F2F), FaceSwap (FS), and NeuralTextures (NT). These subsets facilitate a systematic evaluation across different types of facial forgeries.

\mypara{Experimental Setup.} To address \textbf{RQ1}, we perform training and testing on the same dataset using six datasets: DF, F2F, FS, NT, CD (v1 and v2), and Wild-DF. For \textbf{RQ2}, we employ a three-pronged approach: (i) fine-grained cross-subset testing within the four FF++ subsets, training each model on one subset and testing it on the other three; (ii) training a model on the full FF++ dataset and evaluating it on three external datasets: Wild-DF, CD-v2, and DFDC; and (iii) testing \ourmethod{} in a 3-to-3 setting, where training data consists of SEINE, SVD, and Pika, and test data includes Crafter, Gen2, and Lavie. For \textbf{RQ3}, all models are trained on the following datasets: DF, F2F, FS, NT, CD-v1, Wild-DF, SEINE, and SVD. We record both the peak and average GPU memory usage, as well as training time, to assess resource efficiency. Ablation studies for \textbf{RQ4} are conducted exclusively on the NT dataset, enabling us to evaluate the impact of individual components on overall performance.

\mypara{Model Configuration.} Our \ourmethod{} uses a modified ResNet-50 as the backbone, where we retain the network up to layer-2, followed by a global average pooling layer to generate compact feature representations. The model is trained using the Adam optimizer with a learning rate of 0.0001, a batch size of 16, and a total of 100 epochs. Performance is evaluated using two metrics: Area Under the ROC Curve (AUC) and Accuracy (Acc). All experiments are conducted on a machine with three NVIDIA RTX A5000 GPU (24GB) and two Intel Xeon Gold 6238R CPUs (2.20GHz).

\subsection{\textbf{RQ1}: In-domain Detection Performance}
\begin{table}[!t]
\Huge
\centering
\caption{In-domain performance comparison (AUC). {\color{red}\textbf{Red}} marks best performance and {\color{blue}\textbf{blue}} marks indicate the runner-up. \textbf{Rank} denotes the average ranking of all methods.}
\vspace{-0.2cm}
\resizebox{0.47\textwidth}{!}{
\renewcommand{\arraystretch}{1.8}
\begin{tabular}{l|ccccccc|c|c}
\toprule
\textbf{Methods} & \textbf{NT} & \textbf{FS} & \textbf{F2F} & \textbf{DF} & \textbf{CDV1} & \textbf{CDV2} & \textbf{Wild} & \textbf{Avg} & \textbf{Rank} \\
\midrule
MINTIME & 92.15 & 98.6 & 98.79 & 99.3  & 97.52 & 95.48  & 81.46 & 94.64 & 9.00 \\
TALL & 94.45 & 97.89 & 97.69 & 99.05 & 96.79 & 94.97  & 86.11 & 95.50 & 8.00 \\
MultiAtt & 93.44 & 98.87 & 97.96 & 98.84 & 99.42 & 97.86 & 90.74 & 96.16 & 7.00 \\
RECCE & 93.63 & 98.82 & 98.06 & 99.65 & 99.94 & 96.81 & 92.02 & 96.13 & 6.00 \\
SPLAFS & 97.71 & 99.27 & 99.05 & 99.55 & 98.72 & 97.32 & 91.89 & 97.50 & 6.29 \\
FaceXray & {\color{red}\textbf{98.93}} & 99.20 & 99.06 & 99.17 & 98.35 & 97.89 & 92.11 & 97.10 & 5.29 \\
DCL & 97.69 & 99.90 & 99.28 & 99.76 & {\color{red}\textbf{99.96}} & 99.77 & 88.96 & 97.76 & 4.29 \\
TNEM & 98.06 & 99.90 & 99.33 & 99.98 & 99.59 & 98.51 & {\color{blue}\textbf{92.63}} & 98.14 & 4.14 \\
STIL & 98.72 & {\color{blue}\textbf{99.91}} & {\color{red}\textbf{99.86}} & {\color{red}\textbf{99.99}} & 99.90 & {\color{red}\textbf{99.86}} & 92.36 & {\color{blue}\textbf{98.80}} & {\color{blue}\textbf{2.86}} \\
\midrule
\textbf{\ourmethod{}} & {\color{blue}\textbf{98.76}} & {\color{red}\textbf{99.97}} & {\color{blue}\textbf{99.67}} & {\color{red}\textbf{99.99}} & {\color{red}\textbf{99.96}} & {\color{red}\textbf{99.86}} & {\color{red}\textbf{94.36}} & {\color{red}\textbf{99.51}} & {\color{red}\textbf{1.14}} \\
\bottomrule
\end{tabular}}
\label{indomain1}
\end{table}

Table~\ref{indomain1} compares our method with seven state-of-the-art deepfake detection approaches across seven datasets. Our method consistently ranks as a top performer, achieving the highest average AUC (99.51\%) and the best overall rank (1.14) across all datasets, highlighting its superior in-domain performance and strong generalization. Specifically, our model achieves the best AUC on FS (99.97\%), DF (99.99\%), CD-V1 (99.96\%), CD-V2 (99.86\%), and Wild-DF (94.36\%), while also securing second-best AUC on NT (98.76\%) and F2F (99.67\%). This demonstrates our model’s robustness across both canonical manipulations and more diverse, challenging settings (e.g., CD-V1/V2 and Wild-DF).
Compared to recent methods like STIL and DCL, our approach not only matches or exceeds their performance on most datasets but also maintains full evaluation coverage, unlike some baselines with missing results. Moreover, earlier methods such as FaceXray and MultiAtt show significantly lower performance, especially on unconstrained datasets. Overall, the results demonstrate that \ourmethod{} offers a compelling balance of accuracy and reliability across various deepfake detection scenarios.

\subsection{RQ2: Cross-domain Generalization}
We assess cross-domain generalization under comprehensive experimental settings and provide explainable analyses to showcase the superior generalizability of \ourmethod{}.

\mypara{1-to-3 setting.} Table~\ref{cross1} presents the cross-method generalization results for various deepfake detection approaches, where each model is trained on a specific forgery type and tested on other types within the FF++ dataset. This evaluation highlights each method’s ability to generalize across different manipulation techniques. Our method consistently outperforms competing approaches across most training settings. When trained on DF, our method achieves the highest average accuracy (73.82\%), and obtains the best results on both NT (83.42\%) and F2F (79.34\%). Similarly, with F2F as the training type, our model again ranks first (72.16\%), significantly outperforming others on NT (76.67\%).  
Under FS training, RECCE achieves the highest average (75.84\%), closely followed by our method (72.56\%), with strong results on F2F (66.48\%) and DF (69.99\%). In the NT-trained setting, our method attains a remarkable 84.28\% average accuracy—the best overall—demonstrating superior generalization to unseen videos such as F2F (95.19\%).

\begin{table}[!t]
\small
\centering
\caption{Cross-domain: 3-to-3 setting from training on \textit{SVD}, \textit{Seine}  and \textit{Pika} , and testing on \textit{Crafter}, \textit{Gen2} and \textit{Lavie}.}
\vspace{-0.2cm}
\setlength{\tabcolsep}{6pt}
\resizebox{0.46\textwidth}{!}{
\renewcommand{\arraystretch}{1.1}
\begin{tabular}{l|cc|cc|cc}
\toprule
\multirow{2}{*}{\textbf{Methods}} & 
\multicolumn{2}{c|}{\textbf{Crafter}} & 
\multicolumn{2}{c|}{\textbf{Gen2}} & 
\multicolumn{2}{c}{\textbf{Lavie}}  \\
\cmidrule(lr){2-3} \cmidrule(lr){4-5} \cmidrule(lr){6-7}
 & Acc & AUC & Acc & AUC & Acc & AUC \\
\midrule
STIL      & {\color{blue}\textbf{95.74}} & 97.12  & {\color{red}\textbf{97.39}} & 98.23 & 87.40 & 90.80 \\
DCL       & 93.07 & {\color{blue}\textbf{98.78}}  & {\color{blue}\textbf{93.63}} & {\color{blue}\textbf{98.88}} & {\color{red}\textbf{90.97}} & {\color{blue}\textbf{93.90}} \\
MINTIME & 92.77 & 96.57  & 92.37 & 97.56 & 85.00 & 87.84 \\
TALL      & 91.29 & 97.64  & 92.19 & 98.67 & 84.81 & 87.45  \\ \midrule
\textbf{\ourmethod{}}    & {\color{red}\textbf{95.83}} & {\color{red}\textbf{99.62}}  & 93.06 & {\color{red}\textbf{98.96}} & {\color{blue}\textbf{89.72}} & {\color{red}\textbf{94.48}}  \\
\bottomrule
\end{tabular}}
\vspace{-0.1cm}
\label{cross5}
\end{table}

\begin{table}[!t]
\small
\centering
\caption{Cross-domain: 1-to-3 setting within four FF+ subsets. "-" denotes that the method is trained on that subset.}
\vspace{-0.2cm}
\resizebox{0.46\textwidth}{!}{
\setlength{\tabcolsep}{8pt}
\renewcommand{\arraystretch}{1.0}
\begin{tabular}{l|c|c|c|c|c}
\toprule
\textbf{Methods} & \textbf{DF} & \textbf{F2F} & \textbf{FS} & \textbf{NT}  & \textbf{Avg} \\
\midrule
MultiAtt 
  & - & 66.41 & {\color{blue}\textbf{67.33}} & 66.01  & 66.58 \\
STIL  & - & 65.70 & 40.74 & 75.10  & 60.51 \\
 DCL & - & {\color{blue}\textbf{77.13}} & 61.01 & {\color{blue}\textbf{77.01}}  & 71.00 \\
 RECCE  & - & 70.66 & {\color{red}\textbf{74.29}} & 67.34  & {\color{blue}\textbf{72.49}} \\
 \textbf{\ourmethod{}}  & - & {\color{red}\textbf{79.34}} & 58.71 & {\color{red}\textbf{83.42}} & {\color{red}\textbf{73.82}} \\
\midrule
MultiAtt  & 73.04 & - & {\color{red}\textbf{65.10}} & 71.88  & 67.00 \\
 STIL & 76.63 & - & 51.24 & {\color{blue}\textbf{73.30}}  & 67.39 \\
 DCL & {\color{red}\textbf{91.91}} & - & 59.58 & 66.72  & 67.25 \\
 RECCE  & 75.99 & - & 62.71 & 72.32  & {\color{blue}\textbf{69.27}} \\ 
 \textbf{\ourmethod{}}  & {\color{blue}\textbf{82.38}} & - & {\color{blue}\textbf{62.90}} & {\color{red}\textbf{76.67}}  & {\color{red}\textbf{72.16}} \\
\midrule
 MultiAtt & {\color{blue}\textbf{82.33}} & 66.11 & - & {\color{red}\textbf{63.31}}  & {\color{blue}\textbf{74.40}} \\
 STIL  & 38.20 & 49.60 & - & 49.45  & 56.83 \\
 DCL  & 74.80 & {\color{red}\textbf{69.75}} & - & 52.60  & 65.19 \\
 RECCE & {\color{red}\textbf{82.39}} & 64.44 & - & {\color{blue}\textbf{56.70}}  & {\color{red}\textbf{75.84}} \\
 \textbf{\ourmethod{}} & 69.99 & {\color{blue}\textbf{66.48}} & - & 55.79  & 72.56 \\
\midrule
 MultiAtt & 74.56 & 80.61 & {\color{blue}\textbf{66.07}} & -  & 72.88 \\
 STIL & {\color{blue}\textbf{93.28}} & {\color{blue}\textbf{89.17}} & 51.24 & -  & 73.48 \\
 DCL & 91.23 & 52.13 & {\color{red}\textbf{79.31}} & -  & 73.61 \\
 RECCE & 78.83 & 80.89 & 63.70 & -  & {\color{blue}\textbf{74.27}} \\
 \textbf{\ourmethod{}} & {\color{red}\textbf{93.58}} & {\color{red}\textbf{95.19}} & 64.07 & -  & {\color{red}\textbf{84.28}} \\
\bottomrule
\end{tabular}}
\label{cross1}
\end{table}

\mypara{3-to-3 setting.}  
To further evaluate robustness in realistic scenarios, we adopt a 3-to-3 setting, where the model is trained on SEINE, SVD, and Pika, and tested on Crafter, Gen2, and Lavie (Table~\ref{cross5}). This setup simulates diverse generative patterns in both training and testing. Our method consistently achieves strong and balanced performance across all test datasets. On Crafter, it attains the highest accuracy (95.83\%) and AUC (99.62\%), outperforming STIL and DCL. On Gen2, it achieves the top AUC (98.96\%) and competitive accuracy (93.06\%). While baselines like STIL and DCL may perform well on individual domains, they tend to experience notable drops in performance when evaluated across other domains.

\mypara{Remark.} Under comprehensive cross-domain elevations, we highlight \ourmethod{} for its out-standing generation ability, benefiting from our unified graph representation design.

\mypara{Generalization Analyses.} To assess the generalization capabilities of our method, we conduct t-SNE visualization and Principal Component Analysis (PCA) on both \ourmethod{} and the runner-up method, STIL, using the FF++ dataset. Figure~\ref{vaoss} showcases the t-SNE embeddings of feature representations. In panel (a), the features from STIL reveal a significant clustering of seen fake samples, forming a dense cluster dominated by fake samples, as highlighted by the red ellipse. However, other samples—particularly unseen fakes—are more diffusely distributed and not distinctly separated from real samples. This suggests that STIL tends to overfit to the known fake patterns, restricting its ability to generalize effectively to unseen manipulations. In contrast, panel (b) illustrates that \ourmethod{} produces a more balanced feature space, where seen fake, unseen fake, and real samples are better separated, albeit with some natural overlap. This indicates that \ourmethod{} excels at capturing subtle forgery characteristics, thus improving its generalization across diverse fake video types. To quantitatively analyze the information content within the learned feature representations, we apply Principal Component Analysis (PCA), examining the explained variance across principal components, as shown in Figure~\ref{vaoss2}. For STIL, over 90\% of the variance is concentrated within the top three components, suggesting a low-rank structure that limits its expressiveness and generalization potential. In contrast, \ourmethod{} distributes the variance more evenly across the first seven to eight components, indicating a much richer and more expressive feature space. This broader distribution of variance enables \ourmethod{} to capture diverse and previously unseen forgery patterns, reducing the risk of overfitting to shallow, dataset-specific features. Such a structure enhances the model’s ability to generalize robustly on various deepfake detection tasks.

\begin{figure}[!t]
    \centering
    \subcaptionbox{STIL\label{fig:tsne:stil}}[0.49\linewidth]
    {\includegraphics[width=\linewidth]{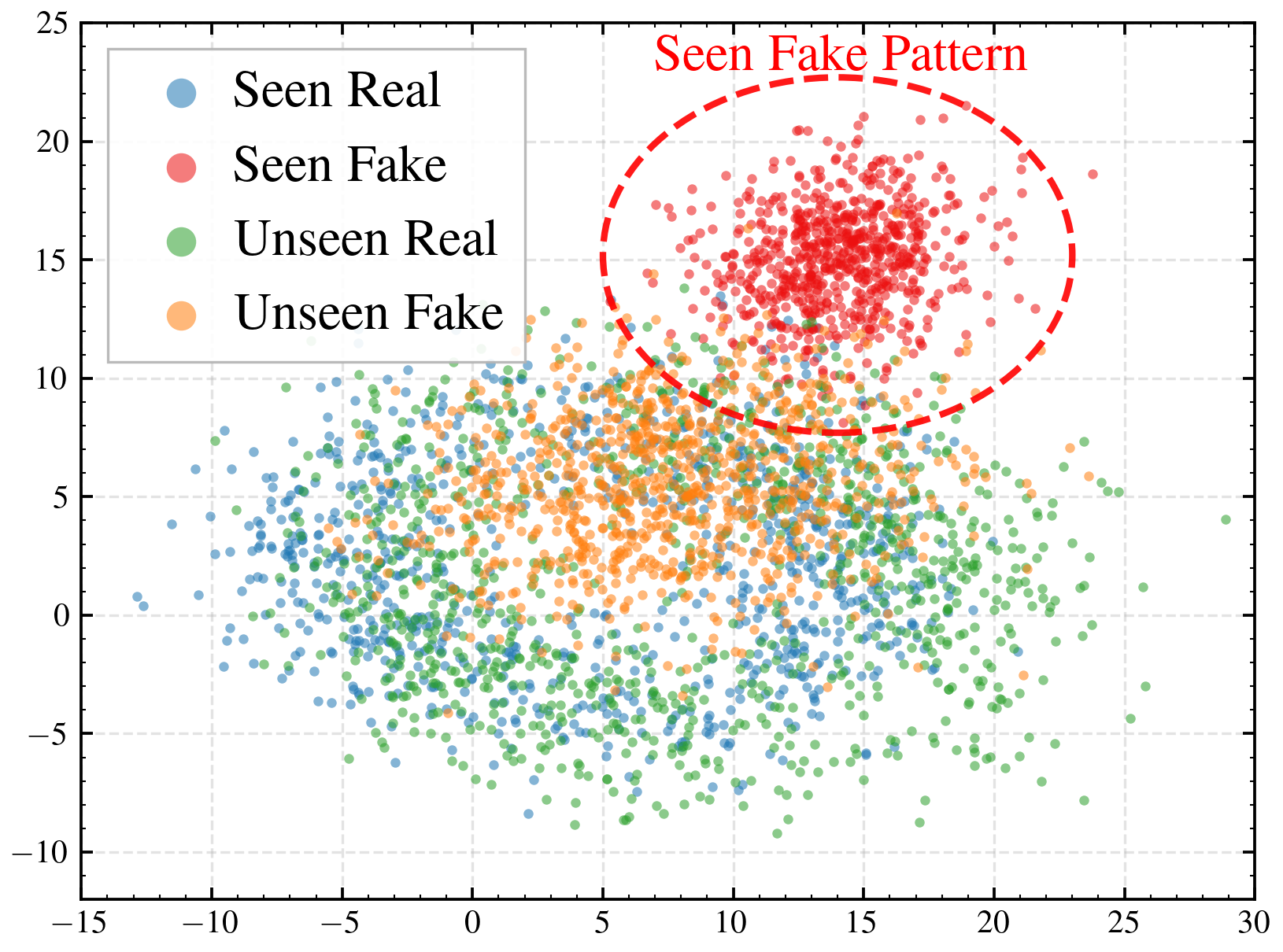}}
    \hfil
    \subcaptionbox{SSTGNN(ours)\label{fig:tsne:sstgnn}}[0.49\linewidth]
    {\includegraphics[width=\linewidth]{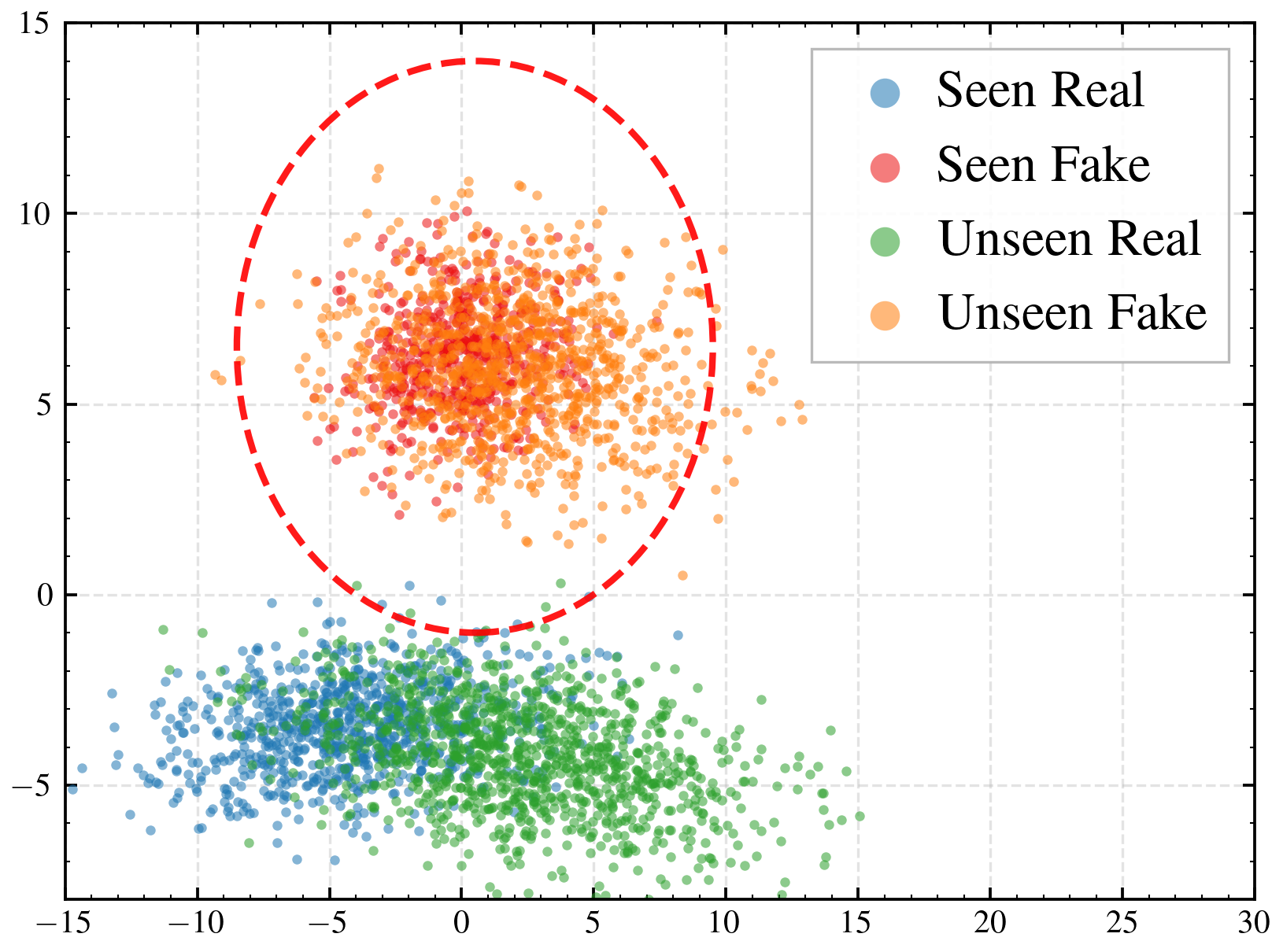}}
    \vspace{-0.3cm}
    \caption{t-SNE visualization of the learned features from (a) STIL and (b) \ourmethod{} on FF++. We highlight that \ourmethod{} achieves a more natural separation between real and fake samples, demonstrating improved generalization, while STIL overfits to seen fake ones with limited generalizability.}
    \label{vaoss}
\end{figure}

\begin{figure}[!t]
    \centering
    \subcaptionbox{STIL\label{fig:pca:stil}}[0.49\linewidth]
    {\includegraphics[width=\linewidth]{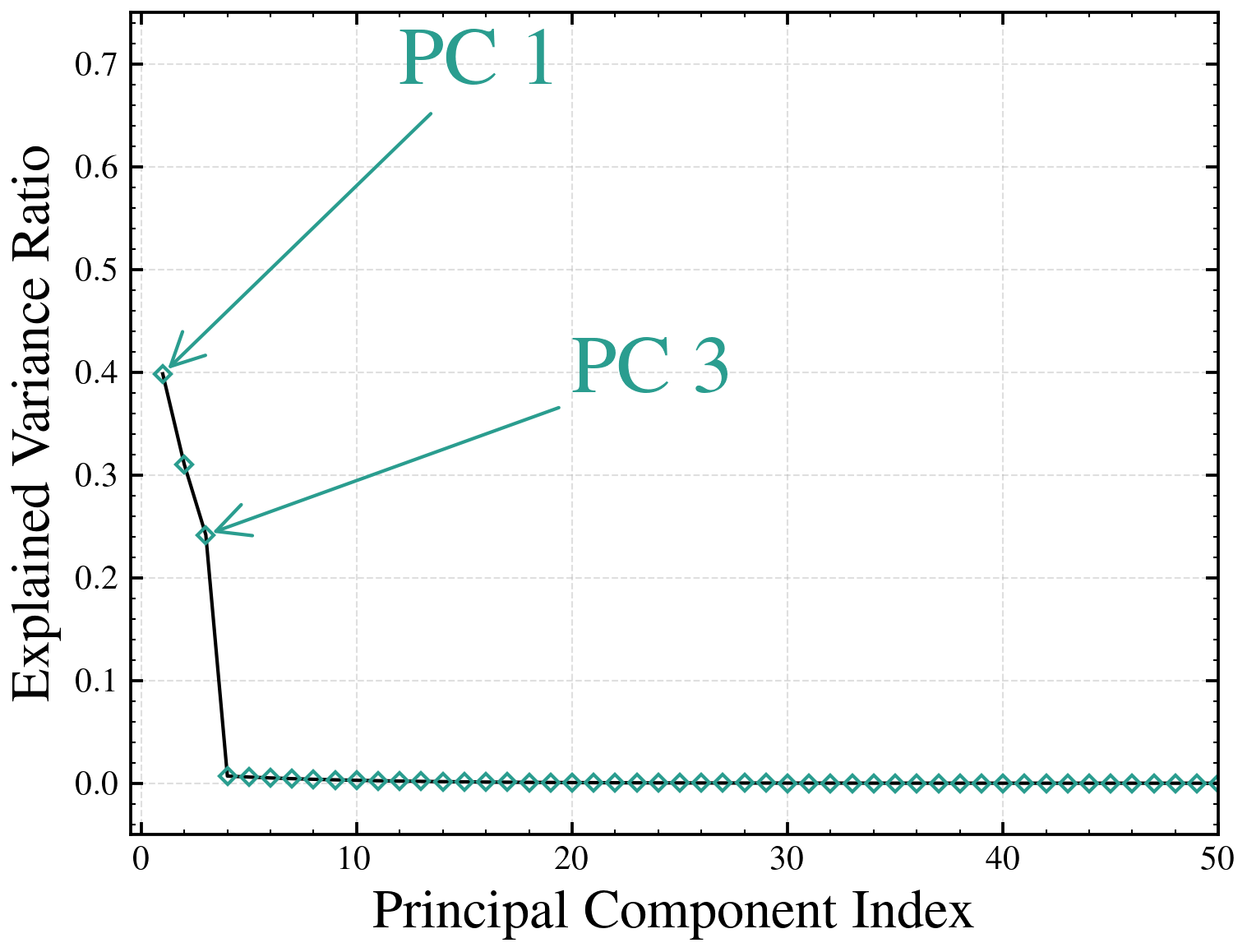}}
    \hfil
    \subcaptionbox{SSTGNN(ours)\label{fig:pca:sstgnn}}[0.49\linewidth]
    {\includegraphics[width=\linewidth]{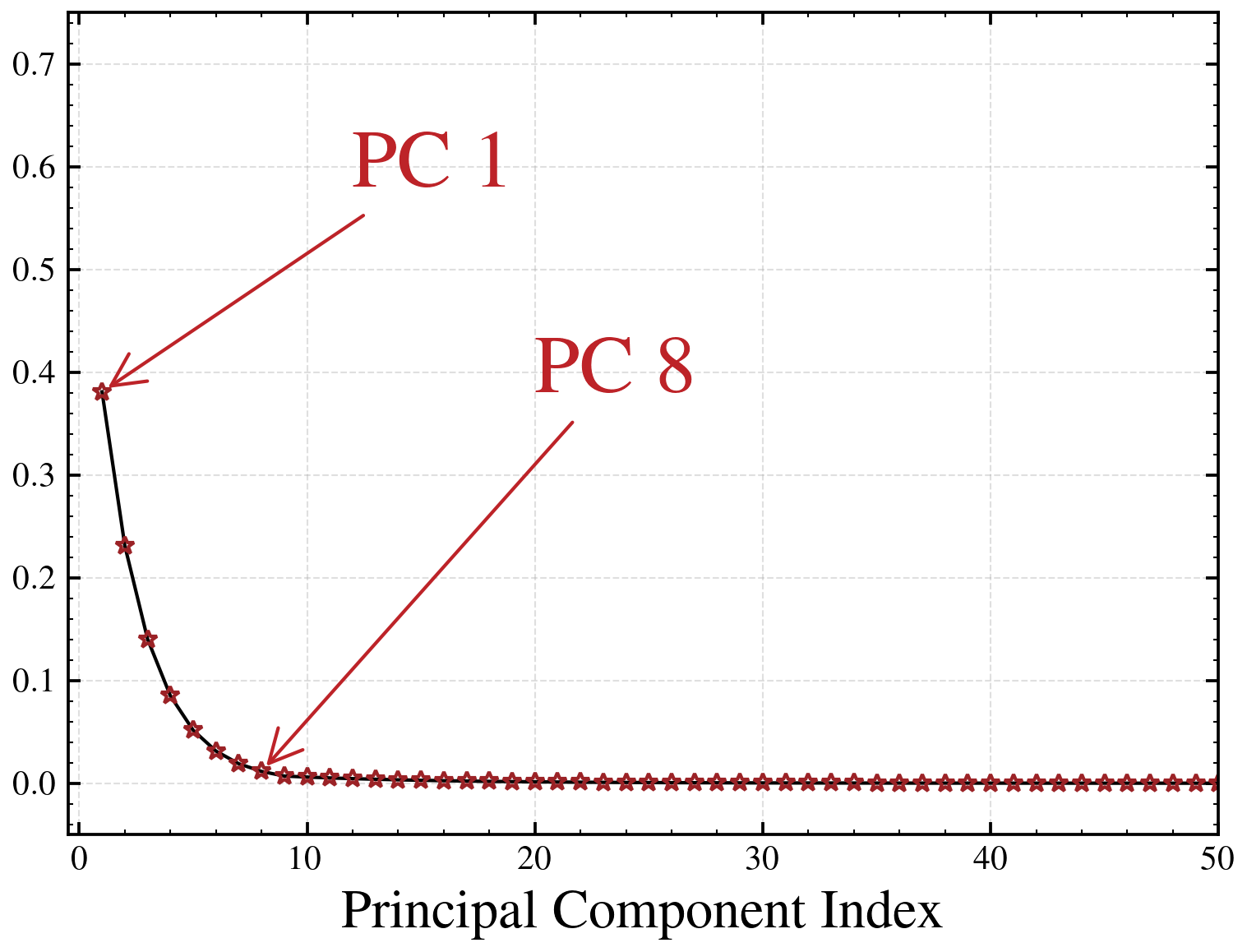}}
    \vspace{-0.3cm}
    \caption{PCA-based analysis of feature space expressiveness on FF++. For STIL, over 90\% of the variance is explained by only top\emph{-Three} principal components, reflecting a low-rank and limited representation. In comparison, \ourmethod{} spreads the variance across the first top\emph{-Eight} components, indicating a more expressive and informative feature space.}
    \label{vaoss2}
    \vspace{-0.3cm}
\end{figure}

\subsection{RQ3: Resource Allocation}

To evaluate the practical efficiency of our framework, we benchmark \ourmethod{} against strong baselines on FF++ in terms of training time, inference latency, and GPU memory usage. As reported in Table~\ref{tab:efficiency}, \ourmethod{} consistently demonstrates superior efficiency across all evaluated metrics. In particular, it achieves the lowest training time (4698–4746 seconds), yielding a 1.27$\times$ reduction compared to the second-best method. During inference, SSTGNN processes each sample in only 10–11 milliseconds, enabling real-time or low-latency deployment and providing speedups of 1.20$\times$ and 1.09$\times$ over competing approaches. Moreover, SSTGNN maintains the smallest GPU memory footprint (8540–8546~MB), corresponding to over 2.20$\times$ memory savings. Taken together, these results underscore the strong computational efficiency and scalability of our framework: SSTGNN achieves state-of-the-art detection performance while significantly reducing training cost, inference latency, and memory consumption, making it particularly suitable for large-scale experiments as well as real-world, resource-constrained scenarios.

\begin{table}[!t]
\centering
\caption{Resource allocation comparison on FF++ (DF, F2F). Training time(s), inference time(ms), and memory (MB).The row Saved indicates the times of resource consumption saved of SSTGNN ({\color{red}\textbf{Red}}) to the runner-up ({\color{blue}\textbf{Red}}) method.}
\vspace{-0.2cm}
\setlength{\tabcolsep}{5.6pt}
\resizebox{0.46\textwidth}{!}{
\renewcommand{\arraystretch}{1.2}
\begin{tabular}{l|cc|cc|cc}
\toprule
\multirow{2}{*}{\textbf{Methods}} & \multicolumn{2}{c|}{\textbf{Training}}
& \multicolumn{2}{c|}{\textbf{Inference}} 
& \multicolumn{2}{c}{\textbf{Memory}} \\
\cmidrule(lr){2-3} \cmidrule(lr){4-5} \cmidrule(lr){6-7}
 & DF & F2F & DF & F2F & DF & F2F \\
\midrule
STIL                 & \textcolor{blue}{\textbf{5986}} & \textcolor{blue}{\textbf{6043}} & \textcolor{blue}{\textbf{12}} & 12 & \textcolor{blue}{\textbf{18758}} & \textcolor{blue}{\textbf{18756}} \\
DCL                  & 6236 & 6348 & \textcolor{blue}{\textbf{12}} & 13 & 24312 & 24334 \\
MINTIME              & 14760 &15326 &23 & 21 & 22964 & 23067 \\
TALL                 & 7210 & 7367 & 14 & 14 & 28351 & 28439 \\
\midrule
\textbf{\ourmethod{}} & \textcolor{red}{\textbf{4698}} & \textcolor{red}{\textbf{4746}} & \textcolor{red}{\textbf{10}} & \textcolor{red}{\textbf{11}} & \textcolor{red}{\textbf{8546}} & \textcolor{red}{\textbf{8540}} \\
\textbf{Saved ($\times$)} & 
\textcolor{black}{\textbf{1.27}} & 
\textcolor{black}{\textbf{1.27}} & 
\textcolor{black}{\textbf{1.20}} & 
\textcolor{black}{\textbf{1.09}} & 
\textcolor{black}{\textbf{2.20}} & 
\textcolor{black}{\textbf{2.20}} \\
\bottomrule
\end{tabular}}
\label{tab:efficiency}
\end{table}

\subsection{RQ4: Ablation Study}
\label{sec:ablation}
To better understand the contribution of each component and the impact of hyper-parameters, we conduct an ablation study divided into two parts. The first part isolates the effect of three core components, i.e., spatial features, spectral features, and the NPR module. The second part explores how patch size and the threshold $\tau$ influence performance, offering insight into the robustness and sensitivity of the model under different configurations.

\begin{table}[!t]
\centering
\vspace{-0.1cm}
\caption{Ablation: spatial, spectral and temporal components.}
\vspace{-0.2cm}
\setlength{\tabcolsep}{6.7pt}
\resizebox{0.46\textwidth}{!}{
\renewcommand{\arraystretch}{1.2}
\begin{tabular}{ccc|c|c}
\toprule \textbf{Temporal} & \textbf{Spectral} & \textbf{Spatial} & \textbf{Accuracy } & \textbf{AUC} \\
\midrule
& \checkmark & \checkmark &  95.08 &  98.30\\
\checkmark &           & \checkmark & 93.78 & 97.33 \\
\checkmark & \checkmark &           & 92.90 & 98.03 \\
\midrule
\checkmark & \checkmark & \checkmark & \textcolor{red}{\textbf{95.63}} & \textcolor{red}{\textbf{98.76}} \\
\bottomrule
\end{tabular}}
\label{abla1}
\end{table}

\mypara{Spatial, Spectral and Temporal Components.} 
Table~\ref{abla1} presents an ablation study quantifying the individual and combined contributions of the spatial, spectral, and temporal components in our framework. Removing any one component leads to a noticeable drop in both accuracy and AUC, confirming the complementary benefits of each design. Notably, removing the spectral module degrades the performance at the most, with nearly $2\%$ and $1\%$ drops in Accuracy and AUC respectively. This greatly demonstrates the effectiveness of our learnable filter designs. Meanwhile, the negative edges involved also help the model to enhance its performance. These findings validate that integrating spatial, spectral, and temporal cues is critical in deepfake video analysis.

\mypara{Patch size and graph thresholds.} Figure~\ref{fig:ablation} examines how different patch sizes and threshold $\tau$ (both $\tau_s$ and $\tau_t$) values affect the model's performance. Among patch sizes, 32×32 achieves the best balance between detail and contextual information, reaching 95.63\% accuracy and 98.76\% AUC. Interestingly, increasing to 56×56 further improves AUC slightly (99.03\%) but offers no significant gain in accuracy, suggesting a saturation point. As for the threshold $\tau$, which controls decision confidence, the best performance is observed at $\tau=0.6$, aligning with the main setting of model. Both too high ($\tau=0.8$) and too low ($\tau=0.2$) degrade accuracy, indicating that overly strict or overly permissive thresholds can harm decision reliability. These results demonstrate that the model is relatively robust but still benefits from easy-to-implement tuning.
\begin{figure}[!t]
    \centering
    \vspace{0.3cm}
    \subcaptionbox{Patch Size\label{fig:ablation:ps}}[0.49\linewidth]
    {\includegraphics[width=\linewidth]{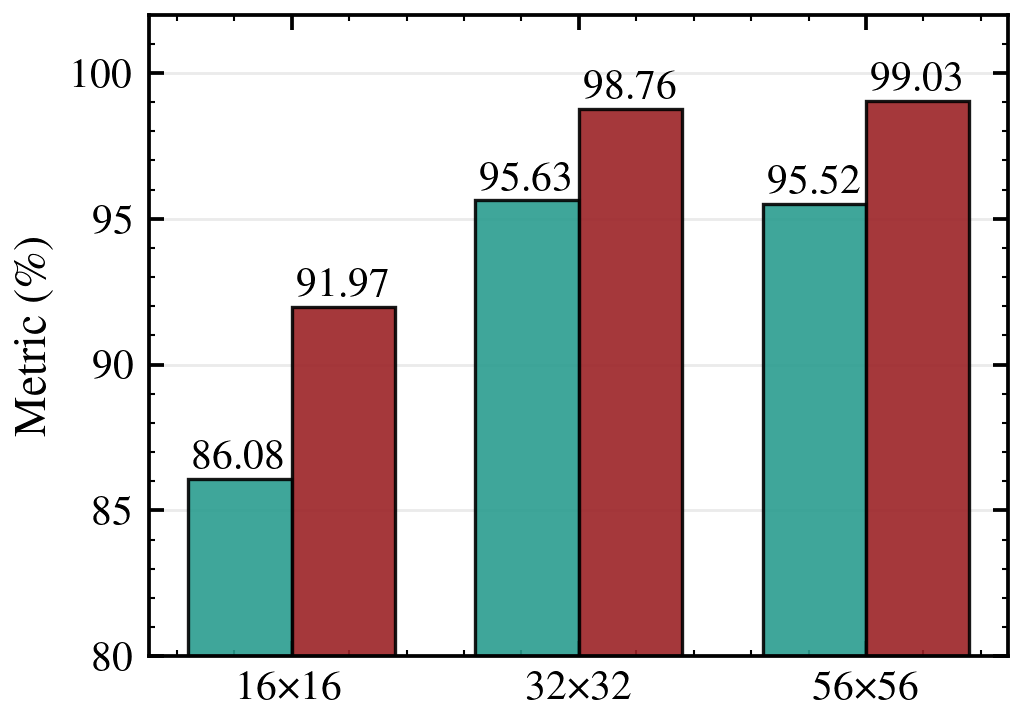}}
    \hfil
    \subcaptionbox{Threshold $\tau$\label{fig:ablation:threshold}}[0.49\linewidth]
    {\includegraphics[width=\linewidth]{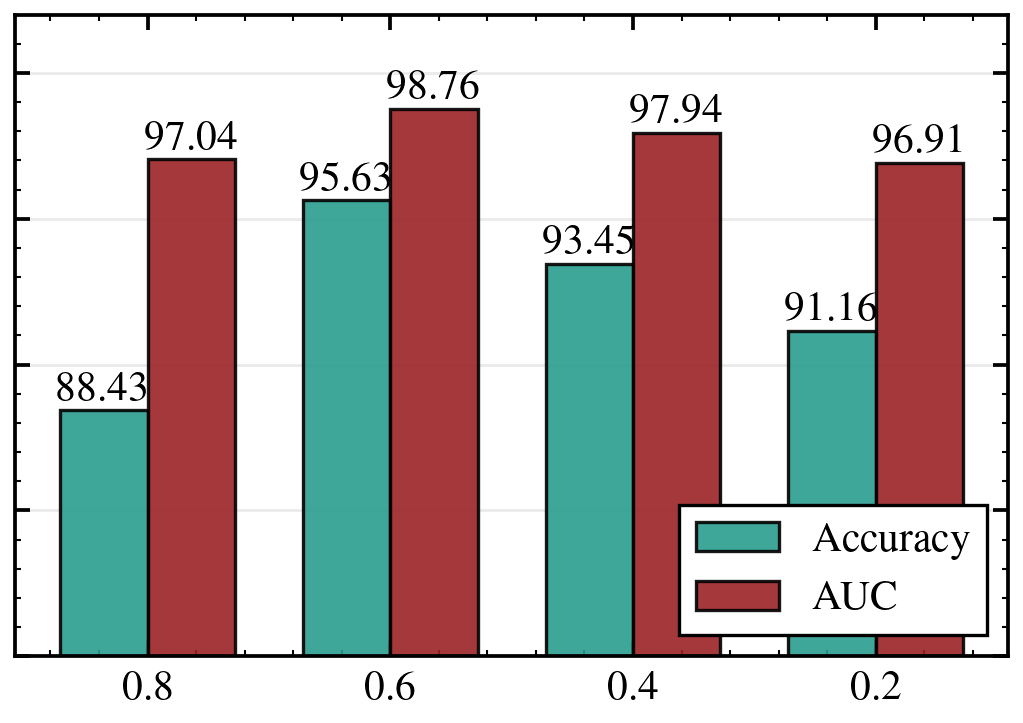}}
    \vspace{-0.3cm}
    \caption{Ablation study on patch size and threshold $\tau$.}
    \label{fig:ablation}
\end{figure}


\subsection{RQ5: Interpretability}
\begin{figure}[!t]
    \centering
    \vspace{-0.3cm}
    \includegraphics[width=0.97\linewidth]{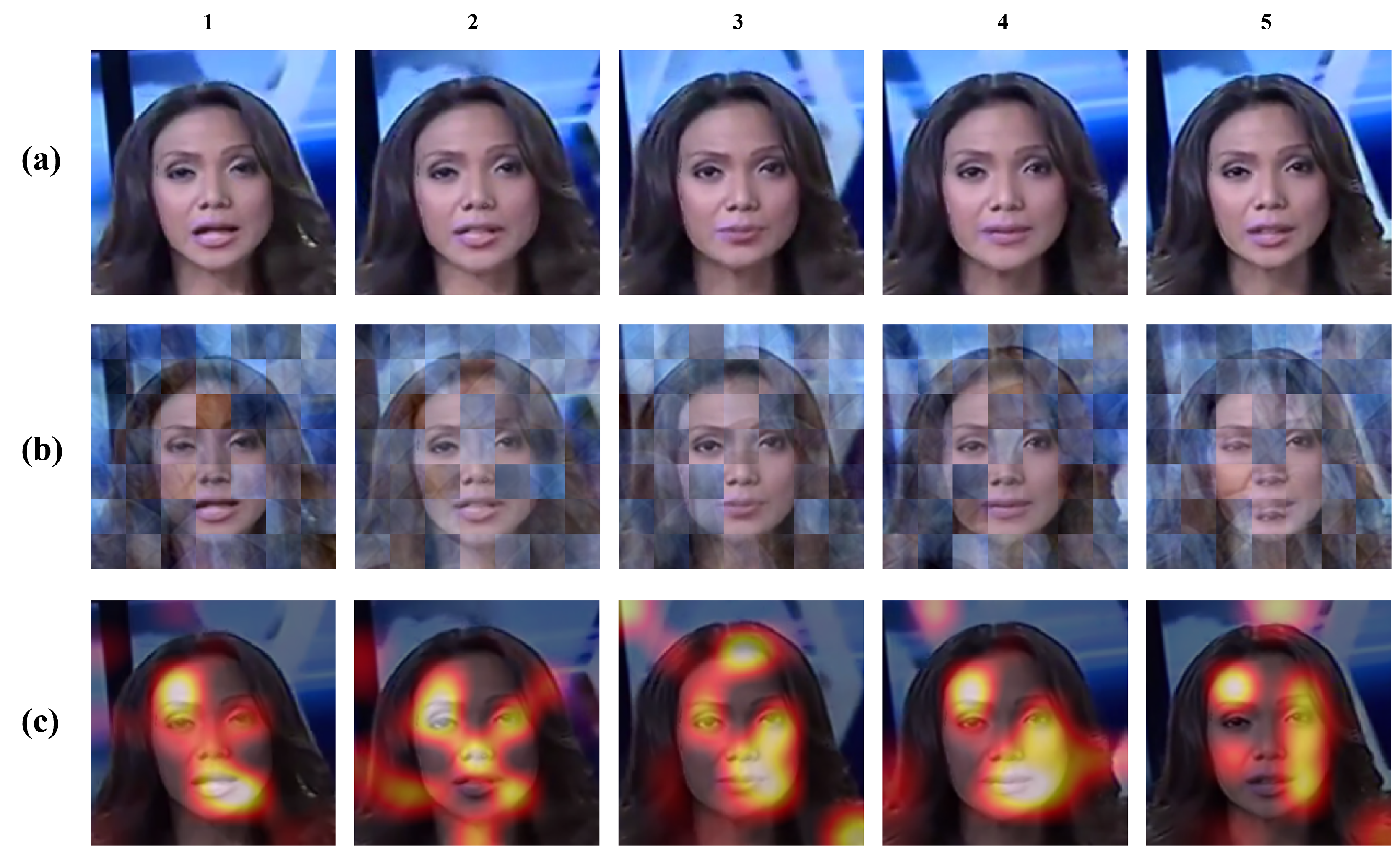}
    \caption{Interpretability Visualizations: (a) raw video, (b) patch-level spectral response and (c) patch-level heatmaps.}
    \label{fig:visualization}
\end{figure}

To investigate the interpretability of SSTGNN, we employ two complementary visualization techniques on raw video inputs: learned graph spectral filtering results and patch-level heatmaps. As discussed in Section~\ref{sec:components}, the proposed spectral filter functions operate on graph signals and adaptively extract task-relevant frequency patterns, while gradient-based heatmaps are commonly used to probe model interpretability by highlighting input regions that contribute most to the prediction. In Figure~\ref{fig:visualization}, we present a representative video sample from the \textit{NT} dataset, showing the original frames together with their corresponding spectral-filtered representations and heatmaps produced by SSTGNN after training.

While the original frames (1--5) appear visually coherent, with stable facial appearance and smooth motion, the spectral representations reveal informative patch-level structural variations that are not apparent in the RGB domain. In particular, spectral responses exhibit non-uniform intensity across facial regions and vary across consecutive frames despite minimal visual differences. For example, in Frame~3, the spectral activation becomes more pronounced in the central facial area compared to adjacent frames, indicating sensitivity to fine-grained frequency variations across neighboring patches rather than surface-level appearance cues. In subsequent frames, spectral responses shift subtly toward the upper facial and forehead regions while the RGB content remains stable, suggesting that SSTGNN captures evolving frequency-domain inconsistencies over time. The patch-level heatmaps further clarify how these spectral cues influence the model’s attention. Across frames, higher responses consistently concentrate on localized facial regions, particularly around the central face and forehead, rather than spreading uniformly across the entire face. While the overall spatial pattern remains stable, its intensity varies gradually over time, reflecting sensitivity to temporal changes under smooth motion. Boundary regions between the face and background also receive elevated responses in some frames, highlighting areas where generative processes often introduce subtle blending or texture inconsistencies. Overall, these visualizations indicate that SSTGNN bases its predictions on interpretable, frame-specific spectral and structural cues rather than identity- or dataset-specific appearance patterns.

\section{Conclusion}
This paper presents \ourmethod{}, a unified framework that integrates spatial, temporal, and spectral information for robust deepfake detection. By modeling videos as patch-level graphs and incorporating learnable spectral filters with temporal differential modeling, \ourmethod{} captures subtle manipulation artifacts that are difficult to detect using conventional appearance-based approaches. Extensive experiments on diverse datasets demonstrate strong in-domain and cross-domain generalization while requiring significantly fewer parameters than existing baselines. 
Through joint reasoning over spatial, spectral, and temporal inconsistencies within a lightweight graph-based architecture, \ourmethod{} not only improves detection accuracy but also enables interpretable analysis of manipulation-related structural patterns. These characteristics make \ourmethod{} well suited for deployment in resource-constrained and open-world scenarios, where both robustness and efficiency are critical.
Looking forward, SSTGNN provides a flexible foundation that may be extended to other video forensics and multimedia integrity tasks involving complex spatial-temporal dependencies.

\section*{ACKNOWLEDGMENTS}
This research is supported by the National Research Foundation, Singapore under its AI Singapore Programme (AISG Award No: AISG3-RP-2024-034).

\bibliographystyle{ACM-Reference-Format}
\bibliography{main}

\appendix
\section{Appendix}
\subsection{Procedure of \ourmethod{}}
\label{sec:algo}
In Algorithm~\ref{alg:SSTGNN}, SSTGNN first partitions each video into \(N\) patches and extracts patch embeddings using the feature encoder \( \theta \). Intra-frame adjacency matrices \( \mathbf{A}_t \) are then constructed based on similarities between normalized patch features, forming frame-level graphs \( \mathbf{G}_t \), which are subsequently connected across consecutive frames via temporal adjacency. After pruning edges using thresholds \( \tau_s \) and \( \tau_t \), a unified spatiotemporal graph \( \mathbf{G} \) is obtained. Spectral filters are learned by applying an MLP to the eigenvalues of the graph Laplacian to generate spectral features. To model spatial and temporal differentials, sub-adjacency matrices are constructed and augmented with negative edges. The resulting spatial and spectral features are aggregated using graph attention layers and concatenated to produce the final prediction for deepfake video classification. Overall, SSTGNN efficiently integrates graph-based learning with adaptive spectral filtering to capture subtle spatiotemporal inconsistencies across diverse deepfake manipulations.

\begin{algorithm}[!h]
\caption{SSTGNN model}
\label{alg:SSTGNN}
\begin{algorithmic}[1]
\REQUIRE 
\( \mathcal{F} \in \mathbb{R}^{T \times H \times W \times C} \), thresholds \( \tau_s \) and \( \tau_{t} \)
\ENSURE Prediction results $\mathbf{y} \in \mathbb{R}^{N \times 2}$
\STATE Divide frames into $N$ patches and extract embeddings via $g_{\theta}$
\STATE Calculate intra-frame edge $\mathbf{A}^{(t)} = \mathrm{SIM}(\mathbf{\overline{X}}^{(t)}, \mathbf{\overline{X}}^{(t)})$, and construct frame-level subgraph $\textbf{G}^{(t)}=(\textbf{V}^{(t)},\textbf{A}^{(t)})$
\STATE Bridge subgraphs $\textbf{G}^{(t)}$ and $\textbf{G}^{(t+1)}$ as $\mathbf{S}^{(t)}(v^{(t)}_i, v^{(t+1)}_i)$
\STATE Form the graph $\mathbf{G} = (\mathbf{V}, \mathbf{A}, \mathbf{\overline{A}}, \mathbf{X})$ after applying \( \tau_s \) and \( \tau_{t} \) to prune redudent edges
\STATE Learn the spectral filter $\boldsymbol{\phi}(\boldsymbol{\lambda}) = f_{\mathrm{MLP}}(\boldsymbol{\lambda}) \in \mathbb{R}^N$, and yield spectral features $\mathbf{Z}_{\text{spectral}}$
\STATE Construct the sub-adjacency matrices $\widetilde{A}$ with negative edges
\STATE Concatenate features from adjacent frames and add negative edge $\mathbf{\overline{A}}(v^{(t)}_{i,j}, v^{(t+1)}_{i,j}) = -1$ into inter-frame adjacency matrix 
\STATE Apply GAT to video graph and learn the spatial features  
\STATE Concatenate features \( \mathbf{Z} = \mathrm{Concat}[\mathbf{Z}_{\text{spatial}}\Vert \mathbf{Z}_{\text{spectral}}] \) and learn towards the prediction results
\end{algorithmic}
\end{algorithm}



\begin{figure*}[!t]
    \centering
    \vspace{-0.15cm}
    \includegraphics[width=0.955\linewidth]{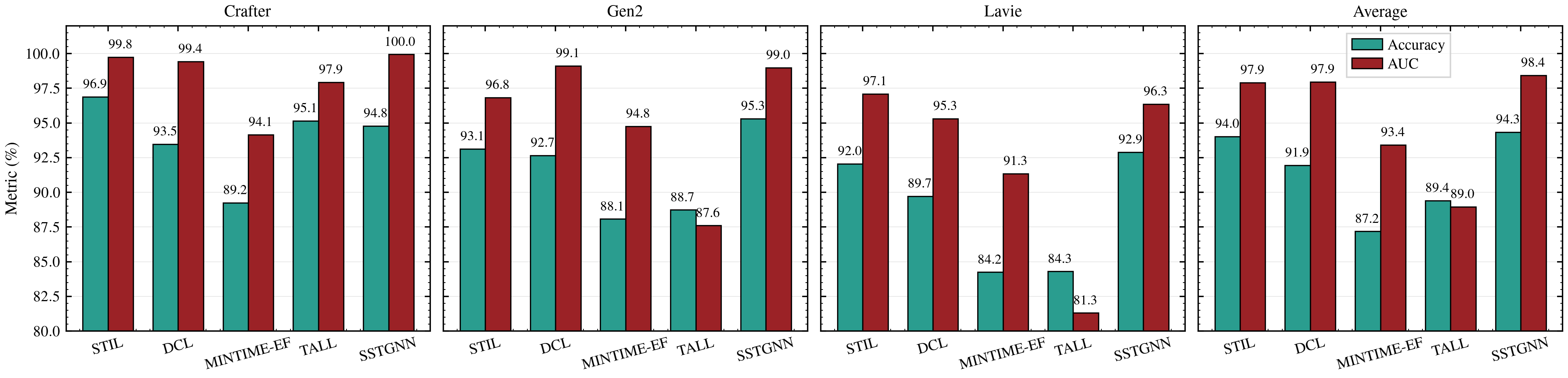}
    \vspace{-0.4cm}
    \caption{Cross-dataset performance of different methods under the 1-to-many setting by training on \textit{SVD}. We report both Accuracy (solid bars) and AUC (bars with hatch) for each method. \textit{MIN} denotes \textit{MINTIME-EF} in the figure.}
    \label{cross3}
    \vspace{-0.4cm}
\end{figure*}

\subsection{Detail of Datasets}\label{app1}

\begin{table}[!b]
\Huge
\centering
\vspace{-0.2cm}
\caption{Video Datasets Overview.}
\vspace{-0.2cm}
\label{datasets}
\renewcommand{\arraystretch}{1.2}
\resizebox{0.48\textwidth}{!}{
\begin{tabular}{lcccccc}
\toprule
\multirow{2}{*}{\textbf{Video source}} & 
\multirow{2}{*}{\textbf{Type}} &  
\multirow{2}{*}{\textbf{FPS}} &
\multirow{2}{*}{\textbf{Length}} & 
\multirow{2}{*}{\textbf{Resolution}} & 
\multicolumn{2}{c}{\textbf{Size}} \\
 & & & & & \textbf{Real} & \textbf{Fake} \\
\midrule
FF++ \cite{rossler2019faceforensics}  
& I2V \& V2V 
& $\leq$30 
& 5-15s 
& $\leq$1280$\times$720 
& 1,000 
& 4,000 \\

Celeb-DF (CD) \cite{li2020celeb} 
& V2V 
& $\leq$30 
& 13s
& $\leq$1280$\times$720 
& 590 
& 5,639 \\

Wild-DF \cite{zi2020wilddeepfake}
& V2V 
& $\leq$25 
& 5-15s
& $\leq$854$\times$480 
& 707 
& 707 \\

DFDC \cite{dolhansky2019deepfake}
& V2V 
& 30 
& 3-15s
& $\leq$1280$\times$720 
& 2,000 
& 3,000 \\

SEINE \cite{chen2023seine}
& I2V 
& 8 
& 2-4s
& 1024$\times$576 
& - 
& 24,737 \\

SVD \cite{blattmann2023stable}
& I2V 
& 8 
& 4s
& 1280$\times$720 
& - 
& 149,026\\

Pika \cite{pikaart2022}
& T2V \& I2V 
& 24 
& 3s
& 1088$\times$640 
& - 
& 98,377\\

OpenSora \cite{peng2025open}
& T2V \& I2V 
& 24 
& 3s
& 1088$\times$640 
& - 
& 177,410\\

ZeroScope \cite{zeroscope2024}
& T2V  
& 8 
& 3s
& 1024$\times$576 
& - 
& 133,169\\

Crafter \cite{chen2023videocrafter1}
& T2V  
& 8 
& 4s
& 256$\times$256 
& - 
& 1,400\\

Gen2 \cite{esser2023structure}
& T2V \& I2V  
& 24 
& 4s
& 896$\times$512 
& - 
& 1,380\\

Lavie \cite{wang2025lavie}
& T2V  
& 8
& 2s
& 1280$\times$2048 
& - 
& 1,400\\

MSR-VTT \cite{xu2016msr}
& -  
& 30
& 10-30s
& 320$\times$240
& 10,000 
& -\\

Youku-mPLUG \cite{xu2023youku}
& -  
& 25-30
& 10-120s
& $\leq$1920$\times$1080
& 10,000 
& -\\
\bottomrule
\end{tabular}}
\end{table}

Table~\ref{datasets} summarizes the video datasets used in our study, covering a broad spectrum of real and fake sources, including traditional deepfake benchmarks, modern generative video datasets, and large-scale real-world corpora. For facial forgery detection, we include classic datasets such as FF++ and Celeb-DF, which rely on video-to-video (V2V) or image-to-video (I2V) manipulations (e.g., DeepFakes and FaceSwap) with moderate resolution and short clip lengths. Wild-DF and DFDC further introduce more realistic variations in lighting, occlusion, and subject diversity. Recent large-scale generative datasets—including SEINE, SVD, and Pika—reflect advances in image- and text-driven video generation, offering high-quality outputs and diverse content. To assess generalization to unseen generative patterns, we adopt open-world benchmarks such as OpenSora, ZeroScope, Crafter, Gen2, and Lavie, which are derived from emerging text-to-video (T2V) models and vary in frame rates, resolutions, and generation styles. In particular, our 3-to-3 evaluation trains on SEINE, SVD, and Pika and tests on Crafter, Gen2, and Lavie. For real-world reference, we use MSR-VTT and Youku-mPLUG as clean real samples to benchmark false positives and open-set robustness. Overall, our dataset suite comprises 13 datasets spanning V2V, I2V, and T2V paradigms, with durations from 2 to 120 seconds, resolutions from 240p to 2K, and over 500,000 fake videos and 20,000+ real clips, enabling robust in-domain and cross-domain evaluation.

\subsection{Additional Cross-Domain Results}\label{app4}

Figure~\ref{cross3} presents results under a more challenging 1-to-many generalization setting, where models are trained on a single dataset \textit{SVD} and tested on other complex generative video datasets: Crafter, Gen2, and Lavie. These datasets go beyond traditional facial forgeries and introduce more diverse generation artifacts, making this evaluation a stronger test of generalization. As shown in Figure~\ref{cross3}, our method consistently demonstrates top-tier performance. 
When trained on SVD, our method secures the highest overall average (94.33\% accuracy and 98.43\% AUC). It consistently ranks either first or second across all test datasets, achieving notable gains on Gen2 (95.31\%) and Lavie (92.9\%). This is especially impressive given the increased diversity and complexity in video generation across domains. Compared to strong baselines like STIL and DCL, our method shows more balanced performance across all targets, whereas others tend to perform well on one dataset but degrade on others.

\section{Proof of Theorem~\ref{lem:NPR}}
\label{appendix:proof_NPR}
\begin{proof}
We aim to prove that the $\ell_0$-NPR technique is equivalent to our spatial differential module under the assumption that the patch size $\ell_0 = 1$ and the Simplified Graph Convolution (SGC) aggregation method is used. The original $\ell_0$-NPR technique applies a local differential operation to pixel grids of size $\ell_0 \times \ell_0$. Specifically, for each pixel $w_{i,j}$ in the grid, the transformation is performed by subtracting the value of the top-left pixel, $w_{1,1}$, such that
\(
\widetilde{w}_{i,j} = w_{i,j} - w_{1,1}, \forall 1 \leq i, j \leq \ell_0.
\)
This can be expressed in matrix form as
\[
\mathbf{W} \rightarrow \widetilde{\mathbf{W}} = \mathbf{W} - \mathbf{W}_{1,1} \mathbf{1},
\]
where $\mathbf{W}$ is the matrix of pixel values and $\mathbf{W}_{1,1}$ denotes the top-left pixel in the grid, and $\mathbf{1}$ is a matrix of ones of the same dimension as $\mathbf{W}$. This transformation effectively captures local interdependence among pixels and is particularly useful in detecting manipulated regions within images. Now, we generalize this $\ell_0$-NPR process to graph embeddings. For a node embedding $v^{(t)}_{i_0,j_0}$ with coordinates $i_0 = i \ell_0$ and $j_0 = j \ell_0$ for $1 \leq i, j \leq \lceil N/\ell_0 \rceil$, we construct $\mathbf{\widetilde{A}}_{i,j}$ as:
\[
\mathrm{diag}(\mathbf{\widetilde{A}}_{i,j}) = \mathbf{1}, \quad \mathbf{\widetilde{A}}_{i,j}(v^{(t)}_{i_0, j_0}, :) = -\mathbf{1}, \quad \mathbf{\widetilde{A}}_{i,j}(:, v^{(t)}_{i_0}) = -\mathbf{1}.
\]
This matrix is used for message passing in the graph learning process, where each node aggregates information from its neighboring nodes according to the adjacency structure. The introduction of negative edges in the adjacency matrix induces a local differential effect, which is conceptually related to the $\ell_0$-NPR method that subtracts a reference pixel value (the top-left pixel) from other pixels within a local grid. In graph learning, message passing is commonly implemented using aggregation schemes such as Simplified Graph Convolution (SGC), where the update rule for a node $v$ at iteration $t$ is given by
\(
h_v^{(t+1)} = \mathbf{A} h_v^{(t)},
\)
with $\mathbf{A}$ denoting the adjacency matrix and $h_v^{(t)}$ the feature vector of node $v$ at time step $t$. Through this operation, each node incorporates information from its local neighborhood to refine its representation. In our formulation, the message passing matrix $\widetilde{\mathbf{A}}_{i,j}$ contains negative edge weights, which attenuate or subtract the contributions of neighboring nodes. This mechanism approximates a local differential operation, as feature updates emphasize relative differences rather than absolute values. When $\ell_0 = 1$, the patch size degenerates to a single pixel, and the resulting update simplifies to a difference between a node’s feature and those of its immediate neighbors. This behavior closely matches the neighborhood-level adjustment performed by $\ell_0$-NPR, where each pixel value is modified by subtracting a fixed local reference.

Therefore, under the specific setting of $\ell_0 = 1$ and SGC-style aggregation, the proposed spatial differential module can be regarded as functionally equivalent to the $\ell_0$-NPR operation in terms of applying a local differential transformation. Both approaches capture local interdependence by emphasizing relative differences within a small neighborhood, leading to the following equivalence:
\[
\ell_0\text{-NPR} \;\equiv\; \text{Spatial Differential (SGC aggregation, } \ell_0 = 1\text{)}.
\]
\end{proof}

\end{document}